\documentclass{article}

\usepackage{amsthm,amsfonts,amsmath,amssymb}
\usepackage{thmtools}
\usepackage{float}
\usepackage{color}
\usepackage{graphicx} 

\usepackage{enumitem}		
\usepackage{fontawesome5}   
\usepackage{subcaption}     
\usepackage{mathtools}

\renewcommand{\Pr}{\mathop{\bf Pr\/}}

\newcommand{\opt}{\textnormal{OPT}}

\def\<{\langle}
\def\>{\rangle}

\newcommand{\Vendi}{\textnormal{Vendi}}
\usepackage{microtype}
\usepackage{graphicx}
\usepackage{subcaption}
\usepackage{booktabs} 

\usepackage{hyperref}

\usepackage[accepted]{icml2026}

\usepackage{amsmath}
\usepackage{amssymb}
\usepackage{mathtools}
\usepackage{amsthm}

\usepackage[capitalize,noabbrev]{cleveref}

\theoremstyle{plain}
\newtheorem{theorem}{Theorem}[section]

\newtheorem{lemma}[theorem]{Lemma}

\theoremstyle{definition}

\theoremstyle{remark}

\usepackage[textsize=tiny]{todonotes}

\icmltitlerunning{Retriever Portfolios: A Principled Approach to Adaptive RAG}

\usepackage{tikz}
\usepackage{subcaption} 
\usetikzlibrary{shapes.geometric, arrows.meta, decorations.pathreplacing, positioning, fit, backgrounds}

\usepackage[most]{tcolorbox}
\tcbuselibrary{listings,breakable}
\usepackage{xcolor}

\newtcblisting{promptbox}[1]{
  title={#1},
  listing only,
  breakable,
  colback=white,
  colframe=black!25,
  left=2mm,right=2mm,top=1mm,bottom=1mm,
  listing options={
    basicstyle=\ttfamily\footnotesize,
    breaklines=true,
    columns=fullflexible
  }
}

\begin{document}

\twocolumn[
  \icmltitle{Retriever Portfolios: A Principled Approach to Adaptive RAG}



  \icmlsetsymbol{equal}{*}

  \begin{icmlauthorlist}
    \icmlauthor{Miltiadis Stouras}{epfl}
    \icmlauthor{Vincent Cohen-Addad}{google}
    \icmlauthor{Silvio Lattanzi}{google}
    \icmlauthor{Ola Svensson}{epfl}
  \end{icmlauthorlist}

  \icmlaffiliation{epfl}{EPFL}
  \icmlaffiliation{google}{Google Research}

  \icmlcorrespondingauthor{Miltiadis Stouras}{miltiadis.stouras@epfl.ch}

  \icmlkeywords{retrieval augmented generation, large language models, retrieval, QA, inference-time tuning, diverse retrievers, portfolios}

  \vskip 0.3in
]



\printAffiliationsAndNotice{}  

\begin{abstract}
Retrieval-augmented generation (RAG) systems typically rely on a single
retriever and a single set of hyperparameters, despite facing highly
heterogeneous queries that range from simple factoid questions to complex
multi-hop reasoning.
We propose a method that automatically selects a small, diverse subset of retrievers (a portfolio) from a large pool of candidates, to cover different regions of the target query distribution.
We formalize this setting via
an expected best-of-$k$ objective over the query distribution and show that it admits an efficient portfolio construction algorithm with near-optimal guarantees. Across multiple QA benchmarks, our learned
portfolios and router pipeline consistently outperform
single-retriever and naive multi-retriever baselines on both retrieval metrics and answer quality. In addition, compared to inference-time hyperparameter tuning approaches, fixed portfolios enable parallel retrieval and LLM calls, achieving comparable (and sometimes better) accuracy with substantially lower latency and token cost.

\end{abstract}

\section{Introduction}

Retrieval-augmented generation (RAG) has emerged as a standard paradigm for
grounding large language models (LLMs) in external knowledge sources. By
retrieving documents from a corpus and conditioning generation on the retrieved
evidence, RAG can improve factual accuracy and reduce hallucinations across a
range of tasks, including open-domain QA and knowledge-intensive dialogue
\citep{lewis2020rag,izacard2021leveraging,shuster2021retrieval}. In this setup, a
retriever maps a user query to a ranked list of documents, and the LLM produces
an answer conditioned on both the query and the retrieved context. Despite
substantial progress on retrieval models and architectures, the current practice in
RAG systems is overwhelmingly to select a \emph{single} retriever (and a single
set of hyperparameters)  and deploy it uniformly for all
queries.

However, the query distributions faced by RAG systems are highly heterogeneous.
Some questions are simple factoid queries that can be answered from a single,
lexically similar passage, while others require multi-hop reasoning over
multiple documents, aggregation of semantically diverse evidence, or
domain-specific terminology. Therefore, different retrieval configurations tend to
specialize in different regimes 
\citep{karpukhin2020dense,jeong2024adaptive}. Even for a fixed retriever,
recent work shows that it can be beneficial to \emph{adapt} its hyperparameters 
at inference time to better suit each query
\citep{rezaei2025vendi}. 
As a result, there is growing evidence that no single
retriever is optimal for all queries, and that fixing a single retrieval strategy
can leave substantial performance on the table across diverse information needs
\citep{kalra2025mor}.


To address this limitation several recent works explicitly aim to adapt their retrieval strategy to the query.
Adaptive-RAG \citep{jeong2024adaptive} trains a classifier to predict the
``complexity level'' of the given query and then chooses among a small,
fixed menu of strategies (no retrieval, single-step retrieval, or iterative
multi-hop retrieval) based on this prediction. Vendi-RAG
\citep{rezaei2025vendi} instead focuses on a parametric retriever and
iteratively tunes its hyperparameter at inference time: for each query,
it generates candidate answers under different settings and employs an
LLM judge to steer the retriever towards a better relevance--diversity
trade-off. 
While these approaches validate the benefits of adapting retrieval, they either
operate over a very small, hand-designed set of strategies or perform expensive
per-query search in a hyperparameter space. 

In contrast, we propose
a principled approach for selecting a small
portfolio of retrievers with complementary behavior, from a large candidate pool, 
that is provably near-optimal for the underlying query distribution.

We tackle this problem by explicitly modeling the fact that different
retrievers may be best for different questions. We view each query
$q$ as a sample from an underlying distribution, and each retriever $r$
(including its architecture and hyperparameters) as a candidate tool
for answering that query. For every query-retriever pair $(q,r)$, we
define a score $s(q,r)$ that measures how suitable $r$ is for $q$,
for example in terms of retrieval quality or end-to-end answer
correctness. Rather than searching for a single retriever that performs
best on average, we instead choose a small set $S$ of $k$ retrievers
and evaluate this set by asking: for a random query $q$, how good is
the \emph{best} retriever in $S$? Formally, we measure the quality of
$S$ by the expected ``best-of-$k$'' score
$\mathbb{E}_{q}[\max_{r \in S} s(q,r)]$ over the query distribution.
This objective encourages portfolios whose members specialize on
different regions of the query space, since redundant retrievers that
behave similarly contribute little to the maximum. In the rest of the
paper we analyze this objective and design algorithms that, given a
finite sample of queries and a large candidate pool of retrievers,
efficiently construct small portfolios with provable guarantees and
strong empirical performance.


\paragraph{Our contributions} We introduce the new problem, RAG portfolio
optimization, and analyze it theoretically and practically. More formally:
\begin{itemize}
    \item\textbf{Formulation \& Theory.} Given a large pool of candidate retrievers, we
  formalize an \emph{expected best-of-$k$} objective over the query distribution
  that measures how well a portfolio of $k$ retrievers covers heterogeneous
  information needs. Furthermore, we prove that one can efficiently obtain
  a near-optimal portfolio of retrievers.

    \item\textbf{Methods.} We propose a pipeline that learns a static portfolio of 
    complementary retrievers offline (Algorithm~\ref{alg:efficient_greedy})
    and trains a lightweight router to dynamically 
    select the best retriever per query (Figure~\ref{fig:inference_pipeline}). 
    This approach amortizes the cost of adaptation, 
    avoiding expensive iterative search at inference time.

    \item\textbf{Empirical evaluation.} We evaluate our approach on diverse open-domain 
    and multi-hop QA benchmarks (HotpotQA, 2WikiMultihopQA, TriviaQA, and MusiQue). Our 
    method consistently yields better retrieval recall and answer accuracy compared to 
    single-retriever baselines and inference-time tuning methods like Vendi-RAG, while 
    significantly reducing latency and token usage.

\end{itemize}

\begin{figure}[t]
\centering
\resizebox{\columnwidth}{!}{
\begin{tikzpicture}[
    font=\small, 
    >=stealth,
    block/.style={draw, rounded corners, align=center, inner sep=8pt, fill=white},
    dashed_block/.style={block, dashed}, 
    pool_style/.style={dashed_block, fill=blue!5},
    data_style/.style={dashed_block, fill=green!5},
    algo_style/.style={dashed_block, line width=1pt, fill=orange!10},
    port_style/.style={block, line width=1.2pt, fill=yellow!10}
]

    \node[pool_style] (pool) at (-1.2, 1.2) {
        \textbf{Candidate Retrievers Pool $\mathcal{R}$} \\
        (DS, Vendi, etc.)
    };
    
    \node[data_style] (data) at (-1.2, -1.2) {
        \textbf{Query Distribution $\mathcal{D}$} \\ 
        (HotpotQA, TriviaQA, etc.)
    };

    \node[algo_style] (algo) at (4.2, 0) {
        \textbf{Algorithm 1} \\
        Retriever Portfolio Selection \\
        $\max_{|S|\le k} \mathbb{E}_{q}[\max_{r \in S} s(q,r)]$
    };

    \node[port_style, align=center, minimum height=1.5cm] (portfolio) at (8.8, 0) {
        \textbf{Learned Portfolio} \\
        $S = \{r_1, \dots, r_k\}$
    };

    \draw[->, line width=1pt] (pool.east) -- (algo.west);
    \draw[->, line width=1pt] (data.east) -- (algo.west);
    \draw[->, line width=1pt] (algo) -- (portfolio);

\end{tikzpicture}
}
\caption{Offline Portfolio Optimization: Selecting a diverse subset of size $k$ from a large pool of size $m$ to cover the query distribution.}
\label{fig:offline_opt}
\end{figure}
\begin{figure*}[t]
\centering
\definecolor{answercolor}{HTML}{E8F7F0}

\begin{tikzpicture}[
    font=\small, 
    scale=0.97,
    transform shape,
    >=stealth,
    block/.style={draw, rounded corners, align=center, inner sep=5pt, fill=white},
    container_style/.style={draw, dashed, rounded corners, inner sep=8pt, fill=blue!2},
    query_node/.style={block, dashed, fill=red!20, font=\footnotesize},
    llm_style/.style={block, thick, fill=orange!20},
    router_block/.style={block, fill=blue!5},
    vector_style/.style={draw, fill=purple!10, minimum width=0.4cm, minimum height=1.0cm},
    score_style/.style={draw, thin, inner sep=2pt, font=\tiny, fill=blue!10},
    tick_style/.style={text=green!60!black, font=\large\bfseries},
    cross_style/.style={text=red!70!black, font=\large\bfseries},
    answer_style/.style={draw, solid, fill=answercolor, align=center, inner sep=3pt}
]

    \node[query_node] (query) at (-0.25, 0) {Query text\\$\mathbf{q}$};

    \node[router_block] (encoder) at (2.25, 0) {Encoder\\(T5+MPNet/E5)};
    \node[vector_style] (enc_q) at (4.45, 0) {$Enc(\mathbf{q})$};
    
    \begin{scope}[on background layer]
        \node[container_style, fit=(encoder) (enc_q)] (router_box) {};
    \end{scope}
    \node[anchor=south, font=\footnotesize\bfseries] at (router_box.north) {Router};

    \node[score_style] (s1) at (5.78, 1.2) {$s_1$};
    \node[score_style] (s2) at (5.78, 0.6) {$s_2$};
    \node[score_style] (s3) at (5.78, 0.0) {$s_3$};
    \node[score_style] (sk) at (5.78, -1.0) {$s_k$};
    \node at (5.78, -0.45) {$\vdots$};
    \node[anchor=south, font=\tiny\itshape, align=center] at (5.78, 1.5) {Similarities with retrievers'\\learned embeddings};

    \draw[->, thick] (6.27, 0) -- node[above, font=\footnotesize] {top-$\ell$} node[below, font=\footnotesize] {filtering} (6.97, 0);
    
    \node[tick_style] (tick1) at (7.3, 1.2) {\checkmark};
    \node[cross_style] (cross1) at (7.3, 0.6) {$\times$};
    \node[tick_style] (tick2) at (7.3, 0.0) {\checkmark};
    \node at (7.3, -0.45) {$\vdots$}; 
    \node[cross_style] (cross2) at (7.3, -1.0) {$\times$};

    \draw[->, thick] (7.6, 0) -- (8.0, 0);

    \node[query_node] (pq1) at (8.6, 0.7) {$\mathbf{q}$};
    \node[block] (r1) at (9.6, 0.7) {$r_1$};
    \node[llm_style] (ans1) at (11.4, 0.7) {Answer LLM};

    \node[query_node] (pql) at (8.6, -0.7) {$\mathbf{q}$};
    \node[block] (rl) at (9.6, -0.7) {$r_{\ell}$};
    \node[llm_style] (ansl) at (11.4, -0.7) {Answer LLM};
    
    \node at (11.0, 0) {$\vdots$};

    \draw[->] (pq1) -- (r1); \draw[->] (r1) -- (ans1);
    \draw[->] (pql) -- (rl); \draw[->] (rl) -- (ansl);

    \begin{scope}[on background layer]
        \node[container_style, fit=(pq1) (ans1) (ansl)] (par_box) {};
    \end{scope}
    \node[anchor=south, font=\footnotesize\bfseries] at (par_box.north) {Parallel Retrieval \& Answering};

    \node[answer_style, font=\tiny] (a1) at (13.2, 0.7) {$a_1$};
    \node at (13.2, 0) {$\vdots$}; 
    \node[answer_style, font=\tiny] (al) at (13.2, -0.7) {$a_\ell$};
    \node[llm_style] (selector) at (14.5, 0) {Selector\\ LLM};
    
    \node[answer_style, font=\footnotesize] (final) at (16.0, 0) {Final\\ answer};

    \draw[->, thick] (query) -- (encoder);
    \draw[->] (encoder) -- (enc_q);
    
    \draw[->, thin] (enc_q.east) -- (s1.west);
    \draw[->, thin] (enc_q.east) -- (s2.west);
    \draw[->, thin] (enc_q.east) -- (s3.west);
    \draw[->, thin] (enc_q.east) -- (sk.west);
    
    \draw[->] (ans1.east) -- (a1.west);
    \draw[->] (ansl.east) -- (al.west);
    \draw[->] (a1.east) -- (selector.west);
    \draw[->] (al.east) -- (selector.west);
    \draw[->, thick] (selector) -- (final);

\end{tikzpicture}
\caption{Adaptive Inference Pipeline: The router encodes query $\mathbf{q}$ and identifies best portfolio members. Selected components execute in parallel to minimize latency, followed by selector aggregation.}
\label{fig:inference_pipeline}
\end{figure*}

\paragraph{Related Work} There are three main areas that are closely
related to our work: retrieval augmented generation, adaptive retrieval approaches (including mixture
of retrievers and routing pipelines) and portfolio optimization.

\vspace{0.7em}
\emph{Retrieval Augmented Generation.}
Retrieval-augmented generation (RAG) has become a standard approach for
grounding large language models (LLMs) in external knowledge. Instead of
relying solely on parametric memory, a RAG system retrieves a small set of
relevant documents from a large corpus and conditions generation on both the
query and the retrieved context, which improves factual accuracy and
knowledge coverage on open-domain and knowledge-intensive tasks
\citep{lewis2020rag,izacard2021leveraging,shuster2021retrieval}. Early work
combined neural retrievers with sequence-to-sequence generators for
open-domain QA~\citep{karpukhin2020dense,lewis2020rag}, and subsequent work has
extended this paradigm to more complex settings, including multi-hop
reasoning and conversational agents~\citep{shuster2021retrieval}. Across
these systems, the retriever is a key component: it is usually chosen once,
tuned on a development set, and then fixed for all queries, despite the
diversity of retrieval behaviors that different questions may require.

\vspace{0.7em}
\emph{Adaptive Retrieval.} Recent work challenges the "one-size-fits-all" 
retrieval paradigm. Approaches like Adaptive-RAG~\cite{jeong2024adaptive} classify 
query complexity to select among fixed strategies (e.g., no retrieval vs. multi-step retrieval). 
Vendi-RAG~\cite{rezaei2025vendi} adopts a more granular approach, iteratively 
tuning a diversity-relevance hyperparameter at inference time to optimize the 
retrieved set. Other methods, such as Self-RAG~\cite{asai2024self}, introduce 
retrieval tokens to dynamically decide when to retrieve or critique retrieved 
passages. Unlike these methods, which often incur high inference latency through 
iterative generation or search, our approach shifts the optimization burden 
offline by learning a fixed portfolio of complementary retrievers that can be routed to efficiently.

\vspace{0.7em}
\emph{Mixtures of Retrievers and Query Routing.} 
Recent work exploits complementary retrieval signals either by \emph{mixing} retrievers
or \emph{routing} queries to different experts.
MoR~\cite{mor} proposes a zero-shot \emph{mixture of retrievers} that assigns per-query
weights to heterogeneous retrievers using pre- and post-retrieval signals, then fuses
their scores to produce a single ranked list.
While MoR validates the benefits of leveraging multiple retrieval signals, it focuses on
score-level fusion over a given set of retrievers.
In contrast, our goal is to \emph{select} a small retriever \emph{portfolio} from a large candidate
pool with provable near-optimal coverage under an expected best-of-$k$ objective, and to
route each query to only a small number of portfolio members, enabling predictable and
parallelizable inference cost when downstream LLM calls are expensive.

On the routing side, RouterDC~\cite{routerdc} studies query-based routing for assembling
multiple expert LLMs via dual contrastive learning, explicitly handling cases where multiple
experts perform well for the same query.
Our setting is analogous, we route queries to retrievers rather than generators, but our main
contribution lies in the \emph{portfolio construction problem} and its guarantees; RouterDC-style
router training is complementary and can be incorporated into our pipeline.

\vspace{0.7em}
\emph{Algorithm Selection and Solution Portfolios.}
Arising from catalog optimization and data mining, portfolio optimization has 
been studied extensively in algorithmic literature.
In the Segmentation Problems, initially introduced by Kleinberg, Papadimitriou and 
Raghavan~\cite{kleinberg2004segmentation}, for any combinatorial optimization problem and 
a set $S$ of different cost vectors for this problem, the segmentation problem asks to 
partition the set $S$ into several \textit{segments} and pick a separate solution for each 
segment, so that the total cost is minimized. In an alternative version of this problem 
Gupta, Moondra and Singh~\cite{gupta2025balancing} studied a variant where the objective 
is to identify a small set of solutions that guarantees a good approximation for each 
one of the cost vectors of interest. In portfolio optimization~\cite{portfolios-itcs}, one is given a combinatorial 
problem, a set of value functions  over the solutions of the problem, and a 
distribution  over the value, and the goal is to select $k$ solutions  
that maximize or minimize the expected value of the {\em best} of those solutions.
Our work instantiates this theoretical framework for RAG, treating retrievers as 
candidate algorithms and queries as problem instances.

\section{Problem Formulation}
\label{sec:problem}

We formalize the retrieval process in RAG as a portfolio selection problem. 
At a high level, we are given a large pool of candidate retrievers and a distribution over queries. Our goal is to select a small subset (a \emph{portfolio}) such that, in expectation, performance is high when, for each query, we are allowed to use the best-performing retriever from the portfolio.

\subsection{Setting and Notation}
\label{subsec:setting}

Let $\mathcal{Q}$ denote the set of questions, and let
$\mathcal{D}$ be an unknown distribution over $\mathcal{Q}$. Let also $\mathcal{R} = \{ r_1, r_2, \dots, r_m \}$ be the set of candidate retrievers.
Each retriever $r \in \mathcal{R}$ is a black-box procedure that, given a query
$q \in \mathcal{Q}$, returns a ranked list of documents. 
We do not assume any particular retrieval algorithm or parameterization; different
retrievers may correspond to different models, strategies, or hyperparameters.

We equip the space of query--retriever pairs with a bounded
\emph{score function}
\[
  s : \mathcal{Q} \times \mathcal{R} \to [0, 1],
\]
where $s(q, r)$ quantifies how suitable retriever $r$ is for answering query
$q$ within a given RAG pipeline.
The definition of
$s(q, r)$ is intentionally abstract here; in later sections we instantiate it
with retrieval metrics and end-to-end answer quality.

\subsection{Portfolio Objective}
\label{subsec:portfolio-objective}

A \emph{portfolio} is a subset $S \subseteq \mathcal{R}$ of retrievers. We fix
a budget $k \in \mathbb{N}$ and restrict our attention to portfolios with
$|S| \le k$. Intuitively, $k$ captures the maximum number of distinct retrievers
one is willing to maintain and potentially deploy in a system.

Given a portfolio $S$ and a query $q$, we define the score of the portfolio on
$q$ as the score of its best retriever:
\[
  \mathrm{score}(q, S)
  \;=\;
  \max_{r \in S} s(q, r).
\]
In words, a portfolio performs well on a query if at least one of its members
does.

The quality of a portfolio $S$ with respect to the query distribution
$\mathcal{D}$ is then measured by the following objective
\begin{equation*}
  F(S)
  \;=\;
  \mathbb{E}_{q \sim \mathcal{D}}
  \big[ \max_{r \in S} s(q, r) \big]
  \;=\;
  \mathbb{E}_{q \sim \mathcal{D}}
  \big[ \mathrm{score}(q, S) \big].
\end{equation*}
The function $F(S)$ is the expected score of the portfolio on a random query.
A portfolio is good when, for most queries sampled from $\mathcal{D}$, at least
one retriever in $S$ performs well; in other words, $F(S)$ captures how well
$S$ covers the heterogeneous behaviors required across the query distribution.

Our goal is to find a portfolio of size at most $k$ that maximizes $F$:
\begin{equation*}
  \max_{S \subseteq \mathcal{R} : |S| \le k} F(S).
\end{equation*}
This is exactly a data-driven solution portfolio problem \cite{portfolios-itcs}, and closely related to
the catalogue problem~\cite{kleinberg2004segmentation}, with retrievers
playing the role of candidate solutions and queries acting as problem
instances.

\section{Selecting Retriever Portfolios from Data}\label{sec:theory}
In this section we present an algorithm (\cref{alg:efficient_greedy}) for
computing a near-optimal portfolio, given access to a distribution of queries.

Recall  from Section~\ref{sec:problem} that, 
given a candidate pool of retrievers $\mathcal R$ and a query distribution $\mathcal D$,
we seek a portfolio $S\subseteq \mathcal R$ of size at most $k$ maximizing the
distributional \emph{best-of-$k$} objective
\begin{equation}
\label{eq:pop_objective}
F(S)\;:=\;\mathbb E_{q\sim\mathcal D}\Big[\max_{r\in S} s(q,r)\Big],
\end{equation}
where $s(q,r)\in[0,1]$ denotes the  quality of retriever $r$ on query $q$.

Algorithm~\ref{alg:efficient_greedy} implements a sample-based greedy procedure.
It first draws $N$ i.i.d.\ queries $Q=\{q_1,\dots,q_N\}\sim\mathcal D^N$ and replaces
the population objective~\eqref{eq:pop_objective} with the empirical estimate
\begin{equation}
\label{eq:emp_objective}
\widehat F_Q(S)\;:=\;\frac1N\sum_{q\in Q}\max_{r\in S} s(q,r).
\end{equation}
The algorithm then constructs $S$ iteratively: at each step it chooses the retriever
$r$ that maximizes the empirical \emph{marginal gain}
$\widehat F_Q(S\cup\{r\})-\widehat F_Q(S)$ and adds it to the portfolio.

To compute marginal gains efficiently, the algorithm maintains a memoized vector
$V[q]=\max_{r'\in S}s(q,r')$ over sampled queries. The marginal gain of adding $r$
can then be written as
\[
\widehat F_Q(S\cup\{r\})-\widehat F_Q(S)
=\frac1N\sum_{q\in Q}\max\!\bigl(0,\;s(q,r)-V[q]\bigr),
\]
so we only need to compare each candidate against the current best score per query.

\paragraph{Running time.}
Overall, the implementation runs in $\mathcal O(k\,|\mathcal R|\,N) = \mathcal O\left(k\,|\mathcal R|\cdot \frac{k \log |\mathcal{R}| + \log(1/\delta)}{\epsilon^2}\right)$ score evaluations
(given access to $s(q,r)$) and $\mathcal O(N)$ additional memory. It is important to note that this is independent of the support/complexity of the query distribution which may be very large. 

\paragraph{Approximation guarantee.}
The function $S\mapsto \max_{r\in S}s(q,r)$ is non-negative, monotone and submodular for every fixed
$q$, and therefore both $F(\cdot)$ and $\widehat F_Q(\cdot)$ are non-negative monotone submodular
set functions (this was similarly observed in the theoretical studies \cite{kleinberg2004segmentation,portfolios-itcs})\footnote{A set function $f:2^V\to\mathbb{R}$ is \emph{non-negative} if $f(S)\geq 0$. 
It is \emph{monotone} if $f(A)\le f(B)$ for all $A\subseteq B\subseteq V$. 
It is \emph{submodular} if it satisfies the diminishing returns property, i.e., for all 
$A\subseteq B\subseteq V$ and $x\in V\setminus B$,
\(
f(A\cup\{x\})-f(A)\ge f(B\cup\{x\})-f(B).
\)
}.
As a consequence, the classical greedy algorithm achieves a $(1-1/e)$-approximation
to the \emph{empirical} optimum under a cardinality constraint \cite{nemhauser1978mp}.

It remains to relate the empirical objective~\eqref{eq:emp_objective} to the population
objective~\eqref{eq:pop_objective}. There are at most
$\binom{|\mathcal R|}{k}$ portfolios of size $k$ (and at most
$\sum_{j=0}^k\binom{|\mathcal R|}{j}$ of size $\le k$). By Hoeffding/Chernoff bounds
and a union bound over this family, it suffices to take
\begin{equation}
\label{eq:sample_complexity}
N=\mathcal O\!\left(\frac{k\log|\mathcal R|+\log(1/\delta)}{\epsilon^2}\right)
\end{equation}
samples to ensure uniform concentration of $\widehat F_Q(S)$ around $F(S)$ for all
$|S|\le k$ with probability at least $1-\delta$.

Putting these pieces  together (see \cref{app:greedy_portfolio_proofs} for detailed arguments) yields the following near-optimality guarantee (where we assume score evaluations take time $\mathcal O(1)$):

\begin{theorem}
    Algorithm~\ref{alg:efficient_greedy} runs in time $\mathcal O\left(k\,|\mathcal R|\cdot \frac{k \log |\mathcal{R}| + \log(1/\delta)}{\epsilon^2}\right)$ and returns a solution $S$ such that with probability at least $1-\delta$,
\(
F(S) \geq (1-\frac1e)\opt-\epsilon,
\)
where  $\opt = \max_{|T|\le k}F(T)$.
\label{thm:greedy-analysis}
\end{theorem}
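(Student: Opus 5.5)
The argument has three parts: a running-time count, a uniform-concentration event, and a transfer of the greedy guarantee from the empirical objective to the population one. The greedy-to-empirical bound is already available via the submodularity observation in the text.

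\emph{Running time.} Algorithm~\ref{alg:efficient_greedy} performs $k$ rounds. In each round it evaluates, for every candidate $r\in\mathcal R$, the gain $\frac1N\sum_{q\in Q}\max(0,s(q,r)-V[q])$, which costs $\mathcal O(N)$. It then updates $V$ in $\mathcal O(N)$. The total is $\mathcal O(k|\mathcal R|N)$, and substituting $N$ from \eqref{eq:sample_complexity} gives the stated bound.

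\emph{Uniform concentration.} Fix any $T$ with $|T|\le k$. The variables $X_i=\max_{r\in T}s(q_i,r)$ are i.i.d., take values in $[0,1]$, and have mean $F(T)$. Hoeffding's inequality gives
\[
\Pr\bigl[|\widehat F_Q(T)-F(T)|>\epsilon/2\bigr]\le 2e^{-N\epsilon^2/2}.
\]
The number of such $T$ is $\sum_{j\le k}\binom{|\mathcal R|}{j}\le (|\mathcal R|+1)^k$. A union bound shows that $N\ge \frac{2}{\epsilon^2}\bigl(k\ln(|\mathcal R|+1)+\ln(2/\delta)\bigr)$ suffices for the event $\mathcal E$: simultaneously for all $|T|\le k$, $|\widehat F_Q(T)-F(T)|\le\epsilon/2$, with probability at least $1-\delta$. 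This is $N=\mathcal O\bigl((k\log|\mathcal R|+\log(1/\delta))/\epsilon^2\bigr)$.

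\emph{Chaining.} Let $T^*$ attain $\opt$ and let $\widehat T$ maximize $\widehat F_Q$ over $|T|\le k$. For each $q$, the map $S\mapsto\max_{r\in S}s(q,r)$ is monotone and submodular. Take the convention $\max_\emptyset=0$, which is valid since scores are nonnegative. Then $\widehat F_Q$, an average of such maps, is also monotone and submodular. By \cite{nemhauser1978mp}, the greedy output $S$ therefore satisfies
\[
\widehat F_Q(S)\ge(1-\tfrac1e)\widehat F_Q(\widehat T)\ge(1-\tfrac1e)\widehat F_Q(T^*).
\]
On $\mathcal E$ we have both $F(S)\ge\widehat F_Q(S)-\epsilon/2$ and $\widehat F_Q(T^*)\ge \opt-\epsilon/2$. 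Combining these,
\[
F(S)\ge(1-\tfrac1e)\opt-\epsilon/2-(1-\tfrac1e)\epsilon/2\ge(1-\tfrac1e)\opt-\epsilon.
\]

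\emph{Main obstacle.} The subtle point is that $S$ depends on the sample $Q$, so Hoeffding cannot be applied to $S$ directly. This is exactly why concentration must hold uniformly over all portfolios of size at most $k$. The only real work is counting that family so the union bound produces the $k\log|\mathcal R|$ term. Note that it would suffice to concentrate on $T^*$ and on all candidate outputs of size $k$, but the crude uniform count is already enough.
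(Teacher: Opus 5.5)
Your proof is correct and follows the paper's argument. You count the running time directly, apply Hoeffding plus a union bound over all portfolios of size at most $k$ to get a uniform deviation event, and chain the Nemhauser--Wolsey--Fisher bound on $\widehat F_Q$ through that event. The differences are cosmetic: you fix the deviation at $\epsilon/2$ to obtain exactly $-\epsilon$, whereas the paper carries $-(2-\frac1e)\eta$ and rescales, and you bound the family size by $(|\mathcal R|+1)^k$ instead of $(e|\mathcal R|/k)^k$.
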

In words, Algorithm~\ref{alg:efficient_greedy} attains the approximation guarantee of $(1-1/e)$ up to an additional $\epsilon$ statistical error term. We remark that the $(1-1/e)$ guarantee is tight, as with no further assumptions, our model can express any coverage function:  it is well-known that, for coverage functions, the $1-1/e$ guarantee is tight for polynomial-time algorithms both in the value-oracle model~\cite{wolsey1982mor} and if $\mathrm{P} \neq \mathrm{NP}$~\cite{feige1998jacm}. 

\begin{algorithm}[tb]
   \caption{Greedy Portfolio Selection}
   \label{alg:efficient_greedy}
\begin{algorithmic}
   \STATE {\bfseries Input:} Candidate pool $\mathcal{R}$, distribution $\mathcal{D}$, budget $k$, error $\epsilon$, failure probability $\delta$ \\[2mm]
   
   \STATE {\bfseries 1. Sample Complexity:}
   \STATE Sample $N$ queries $Q = \{q_1, \dots, q_N\}$ independently from $\mathcal{D}$, where $N = \mathcal{O}\left(\frac{k \log |\mathcal{R}| + \log(1/\delta)}{\epsilon^2}\right)$\\[2mm]
   
   \STATE {\bfseries 2. Initialization:}
   \STATE $S \leftarrow \emptyset$
   \STATE $V[q] \leftarrow 0$ for all $q \in Q$ \\[2mm]
   
   \STATE {\bfseries 3. Greedy Selection:}
   \FOR{$i=1$ {\bfseries to} $k$}
      \STATE $r^* \leftarrow \arg\max_{r \in \mathcal{R} \setminus S} \sum_{q \in Q} \max(0, s(q, r) - V[q])$
      \STATE $S \leftarrow S \cup \{r^*\}$
      \FORALL{$q \in Q$}
         \STATE $V[q] \leftarrow \max(V[q], s(q, r^*))$ 
      \ENDFOR
   \ENDFOR\\[2mm]
   \STATE {\bfseries Output:} Near-optimal portfolio $S$
\end{algorithmic}
\end{algorithm}


\section{Pipeline \& Experimental Setup}
\label{sec:experimental-setup}

We evaluate retriever portfolios on four open-domain QA benchmarks and report both retrieval-only coverage metrics and end-to-end answer accuracy.
All experiments are run locally on NVIDIA A100 80GB GPUs.
Additional implementation details (exact hyperparameter grids, retriever pseudocode, router training settings, and prompt templates) are provided in \cref{sec:app-implementation}.

\subsection{Routing and inference-time selection}
\label{sec:exp:routing}

At inference time, a learned \emph{contrastive} router represents the query $\mathbf{q}$ using its raw text together with cached MPNet and E5 query embeddings.
A frozen Flan-T5-Large encoder \citep{flant5} produces a contextual text representation, which is fused with the two backbone-specific dense embeddings (see \cref{app:router} for more details).
The router scores each of the $k$ portfolio retrievers by similarity to a learned retriever embedding and executes the top-$\ell$ of the portfolio. This pipeline is visualized in Figure~\ref{fig:inference_pipeline}.
The router is trained from retrieval supervision with a multi-positive contrastive objective, following the approach of \citet{routerdc}, where positives are the retrievers that achieve the best Recall@$k$ for each training query. 
The router training details (loss, tie handling, optimization schedule, and hyperparameters) are deferred to \cref{app:router}.

\subsection{Answer and judge models}
\label{sec:exp:models}

We use open-weight instruction-tuned LLMs as answerers in the RAG pipeline and as selectors for selecting among candidate answers.
In our main experiments, we report end-to-end results with \textsc{Gemma-3-27B-It} \citep{gemma3} and \textsc{Llama-3.1-70B-Instruct} \citep{llama3} as answer models.
For fairness, we keep the answer-generating procedure fixed across all portfolio and baseline methods and use the same prompt formats for all evaluated retrievers. Prompt templates are provided in \cref{app:prompts}.

\subsection{Datasets}
\label{sec:exp:datasets}

We follow the benchmark suite used by Adaptive-RAG \citep{jeong2024adaptive} and evaluate on:
HotpotQA \citep{hotpotqa}, 2WikiMultiHopQA \citep{wikimultihopqa}, TriviaQA \citep{joshi2017triviaqa}, and MusiQue \citep{musique}.
These datasets span factoid and multi-hop questions and include gold supporting documents for retrieval evaluation.
Table~\ref{tab:dataset_stats} summarizes the train/test sizes and corpus statistics.

\begin{table}[t]
\centering
\footnotesize
\setlength{\tabcolsep}{3pt} 
\renewcommand{\arraystretch}{0.9} 
\caption{Dataset statistics. ``Train'' and ``Test'' columns correspond to the number of train and test questions in each dataset. The ``Corpus'' column presents the number of text chunks in the document corpus used for this dataset. ``Avg. Supp.'' is the average number of supporting documents per question.}
\label{tab:dataset_stats}
\resizebox{\columnwidth}{!}{
\begin{tabular}{lcccc}
\toprule
\textbf{Dataset} & \textbf{Train} & \textbf{Test} & \textbf{Corpus} & \textbf{Avg. Supp.} \\
\midrule
HotpotQA & 88{,}066 & 9{,}786 & 5{,}233{,}329 & 2.00 \\
MusiQue          & 19{,}938  & 2{,}417  & 101{,}962     & 2.37 \\
TriviaQA         & 60{,}413  & 8{,}837  & 856{,}435     & 11.86 \\
2WikiMultiHopQA  & 167{,}454 & 12{,}576 & 384{,}857     & 2.42 \\
\bottomrule
\end{tabular}
}
\end{table}

\subsection{Corpus processing and dense indexing}
\label{sec:exp:index}

All retrieval methods operate over a shared chunked corpus, but dense retrieval artifacts are maintained separately for each embedding backbone.
We split each corpus document into overlapping text chunks (512 tokens with 50-token overlap) and treat each chunk as a retrieval unit.
We build one dense index per dataset and backbone.
For MPNet-based retrievers we use the MPNet chunk/query embeddings; for E5-based retrievers we use the corresponding E5 embeddings.
All embeddings are $\ell_2$-normalized, so inner product corresponds to cosine similarity, and each dense index is implemented as a FAISS inner-product index \citep{faiss}.

To make evaluation over large retriever pools tractable, we use cached candidate generation.
For each query and dense backbone we first \emph{prefilter} a fixed candidate set of $M$ text chunks via the matching FAISS index (we use $M=1000$ in all main experiments).
Dense candidate retrievers then re-rank / re-select from this backbone-specific candidate set and output a fixed number of chunks $n$ (we use $n=4$) that are passed to the answer model.
Candidate retrieval outputs are cached per backbone and reused across all retriever configurations; see \cref{app:indexing}.

\subsection{Candidate retriever pools}
\label{sec:exp:pools}

We instantiate the abstract candidate set $\mathcal{R}$ as a heterogeneous union of multiple retriever families and multiple embedding backbones.
Each family/backbone/hyperparameter configuration is treated as a distinct candidate retriever in the portfolio-selection problem.
The portfolio objective and \cref{alg:efficient_greedy} are retriever-agnostic: they only require the training-query score matrix over candidates.

\paragraph{Dense baselines.}
For each dense backbone, we include the standard top-$n$ dense retriever that simply returns the highest-scoring FAISS neighbors with no diversification.
In our implementation this is the DS configuration with hyperparameters $\gamma=0$ and $r=1$, so the dense baselines participate directly in the same candidate pool as the diversified DS variants.

\paragraph{DiscountedSimilarity (DS).}
The DS retriever begins by ranking the backbone-specific candidate chunks according to their similarity scores with the question.
It then iteratively selects $n$ chunks while down-weighting candidates that are too similar to already-selected chunks.
It is parameterized by a discount strength parameter $\gamma$ and a similarity threshold $r$.
For each of MPNet and E5, the DS pool contains $140$ $(\gamma,r)$ settings plus the dense baseline, giving $141$ candidates per backbone embedding.
In \cref{app:ds} we provide a pseudocode implementation of the DS retriever as well as the exact hyperparameter pool and other implementation details.

\paragraph{Vendi retriever.}
Vendi \citep{rezaei2025vendi} selects chunks by trading off query relevance and intra-set diversity using a Vendi-score-based objective, with trade-off parameter $s\in[0,1]$.
We instantiate Vendi separately over MPNet and E5 and sweep $s$ in increments of $0.05$ (21 configurations per backbone).
In \cref{app:vendi} we provide a pseudocode implementation of the Vendi retriever and other implementation details.

\paragraph{Graph-dense retrievers.}
We also include graph-based dense retrieval variants, inspired by \cite{graphrag}.
For each dataset, we build an entity-chunk graph from the chunked corpus, by identifying the entities that exist in a specific document chunk. The indexed graph is simply a bipartite graph where an entity has an edge with a chunk if it appears in that chunk.
At query time, we identify the entities of the query and do a breadth-first search on the indexed graph to gather related document chunks.
The fetched chunks are then reranked with MPNet or E5 dense similarity.
The graph-dense pool sweeps the re-ranking backbone, the maximum number of graph-expansion hops, the maximum entity document frequency, and maximum graph candidates (see \cref{app:pools}), yielding $36$ graph-based candidates.

Overall, all the retriever pools contain $360$ retrievers together: DS (including dense baselines) over MPNet/E5, Vendi over MPNet/E5, and graph-dense variants over MPNet/E5.

\subsection{Portfolio construction}
\label{sec:exp:portfolio}

Portfolios are constructed offline using Algorithm~\ref{alg:efficient_greedy} with the best-of-$k$ objective.
Concretely, we compute a score matrix $s(q,r)$ which represents the performance of all candidate retrievers across a representative sample of training queries.
The score function $s(q,r)$ is instantiated to retrieval Recall@$k$ and is calculated using the supporting documents of each question in the training set.
For the all-pool experiment, we concatenate the score matrices for all family/backbone pools along the retriever dimension and run the greedy objective over the resulting union.
This computation is accelerated by per-backbone caches and batched retriever implementations (for more details see \cref{app:batch}).

Given the score matrix, we greedily select a portfolio $S$ of size $k$ maximizing
$\mathbb{E}_{q}[\max_{r\in S} s(q,r)]$.
Our main portfolio is \emph{union-trained}: we pool training queries across all datasets and select from the full heterogeneous candidate set.
We compare this portfolio against average-best retriever selection, which chooses candidates by their mean training score, and against controls that retrieve more documents with a single retriever.

\subsection{Metrics}
\label{sec:exp:metrics}

We evaluate retrieval quality via \textbf{Support Recall@$k$} and \textbf{F1@$k$} based on ground-truth documents of the datasets.
Downstream QA is assessed using \textbf{Exact Match (EM)} scores alongside generation token counts and wall-clock latency to characterize efficiency-accuracy trade-offs.
We additionally report ``retrieve more documents'' controls to distinguish portfolio complementarity from simply increasing the context budget.
See \cref{app:metrics} for formal definitions, normalization procedures, and efficiency benchmarking details.




\section{Results}
\label{sec:results}

We evaluate retriever portfolios on four QA benchmarks (HotpotQA, 2WikiMultiHopQA, TriviaQA, and MusiQue) using two
answer models (Gemma-3-27B-It and Llama-3.1-70B-Instruct).
Our results address three questions:
(1) do learned portfolios provide better \emph{retrieval coverage} as the portfolio size $k$ grows
(2) do these retrieval gains translate into end-to-end QA improvements beyond ``retrieving more documents,'' and
(3) how do fixed portfolios compare to inference-time adaptation in terms of accuracy--cost trade-offs.

\subsection{Portfolios improve retrieval coverage}
\label{sec:results:retrieval}

\begin{figure*}[t!]
  \centering

  \includegraphics[width=0.98\textwidth]{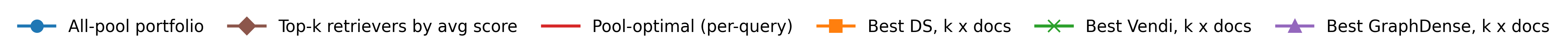}
  \vspace{0.15em}

  \begin{subfigure}[t]{0.48\textwidth}
    \centering
    \includegraphics[width=\linewidth]{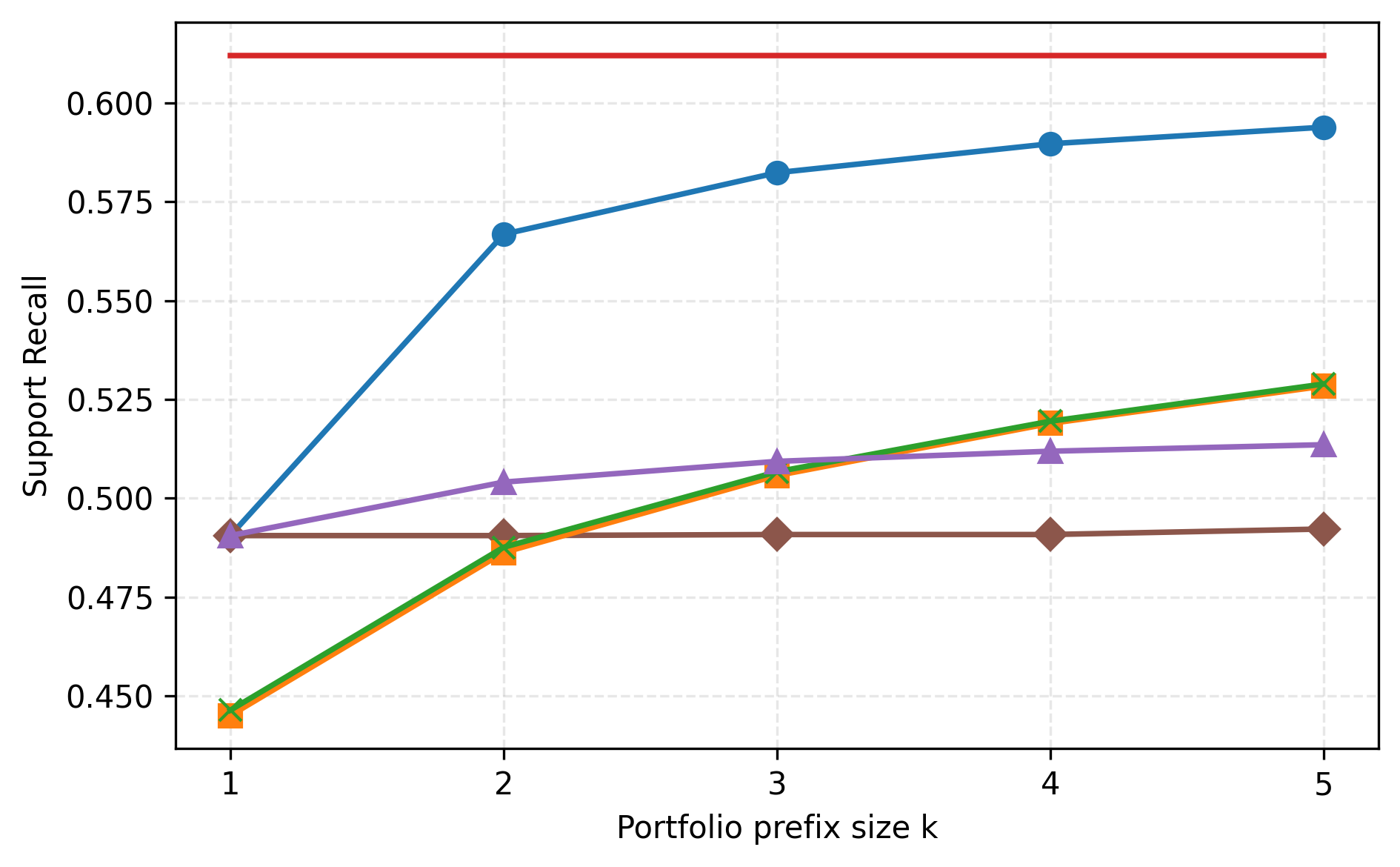}
    \caption{Support Recall}
    \label{fig:all_pool_support_recall}
  \end{subfigure}\hfill
  \begin{subfigure}[t]{0.48\textwidth}
    \centering
    \includegraphics[width=\linewidth]{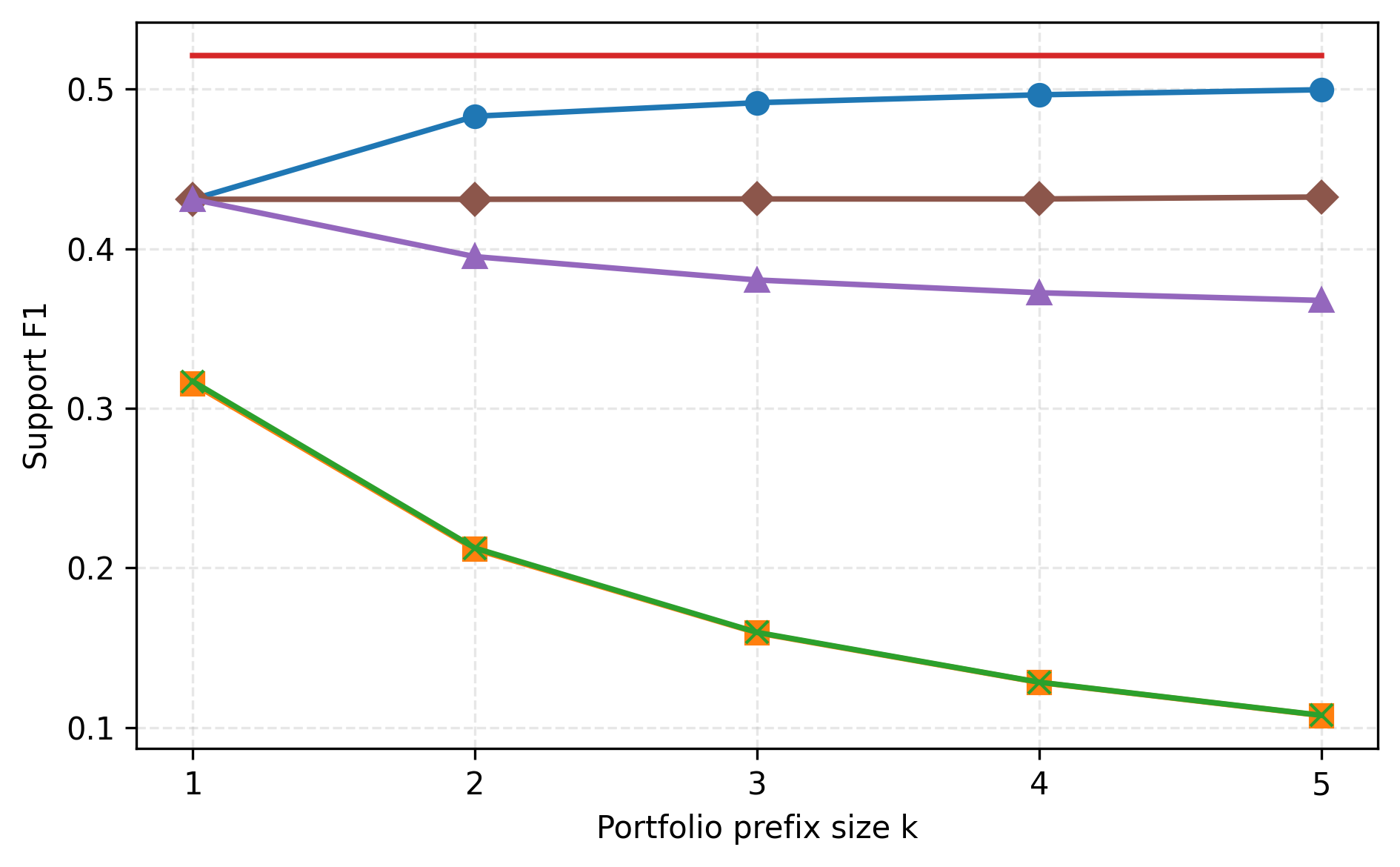}
    \caption{Support F1}
    \label{fig:all_pool_support_f1}
  \end{subfigure}

  \caption{Retrieval coverage of the union-trained all-pool portfolio.
  Curves are averaged over HotpotQA, MusiQue, TriviaQA, and 2WikiMultiHopQA.
  The all-pool curve evaluates greedy portfolio prefixes by best-of-$k$ support score.
  The top-$k$ average baseline selects the $k$ retrievers with highest mean training score.
  The $k\times$ document baselines run one best retriever from a family while increasing its document budget from 4 to $4k$.}
  \label{fig:recalls}
\end{figure*}

We first measure retrieval performance in isolation, independent of the answer model.
Following our formulation, we evaluate a size-$k$ portfolio by its \emph{best-of-$k$} retrieval score:
for each query, we take the maximum support-document score achieved by any member of the portfolio.
This isolates portfolio coverage from the downstream router and answer model, which are evaluated in the next subsection.
Figure~\ref{fig:recalls} reports support recall and support F1 for various portfolio sizes selected from the full heterogeneous candidate pool:
DS and Vendi retrievers with MPNet and E5 backbones, together with GraphDense retrievers, for 360 candidates in total.
The portfolio is trained once on the pooled training queries from all four benchmarks and then evaluated on the corresponding test sets.

\begin{table}[t]
  \caption{Members of the size-5 portfolio used in Figure~\ref{fig:recalls}.
  The portfolio is selected greedily on the union of training queries.}
  \label{tab:all_pool_portfolio}
  \centering
  \scriptsize
  \setlength{\tabcolsep}{2pt}
  \begin{tabular}{@{}rp{0.28\columnwidth}rp{0.38\columnwidth}@{}}
    \toprule
    Rank & Retriever & Avg. Score & Hyperparameters \\
    \midrule
    1 & GraphDense / E5 & 0.6058 & hops=1, df=500 \\
    2 & Vendi / E5 & 0.5421 & $s=0.05$ \\
    3 & GraphDense / E5 & 0.5532 & hops=3, df=100\\
    4 & GraphDense / MPNet & 0.4985 & hops=3, df=100 \\
    5 & Vendi / E5 & 0.5223 & $s=0.20$ \\
    \bottomrule
  \end{tabular}
\end{table}

\paragraph{Portfolio selection is not equivalent to picking the best retrievers on average.}
A natural alternative to portfolio optimization is to perform a grid search over retriever configurations, rank candidates by their average training performance, and keep the top $k$.
Figure~\ref{fig:recalls} evaluates exactly this baseline by taking the best-of-$k$ score over the top-$k$ candidates by average training score.
The baseline is nearly flat: at $k=5$, it reaches only 0.492 support recall and 0.432 support F1, while the learned portfolio reaches 0.594 and 0.500, respectively.
The reason is visible in the selected portfolio.
After the strongest average retriever, the greedy objective adds lower-average but complementary Vendi and GraphDense variants (Table~\ref{tab:all_pool_portfolio}) because they cover queries missed by earlier members.
In contrast, the top-$k$ average list is dominated by closely related GraphDense/E5 configurations, so additional members add little new coverage.

\paragraph{Gains are not explained by retrieving more documents.}
Another possible explanation for the improved recall scores could be that a portfolio simply fetches more context.
The $k\times$ document baselines in Figure~\ref{fig:recalls} test this directly:
they run the single best retriever from each family while increasing the number of returned documents from $4$ to $4\cdot k$.
As we can see, a portfolio with $2$ members significantly outperforms even fetching $20$ documents with the single best retriever.
Inflating the number of fetched documents can increase recall, but does so by admitting many more non-support documents.
Their support F1 therefore drops sharply as $k$ grows (for DS and Vendi, from about 0.32 at $k=1$ to about 0.11 at $k=5$; for GraphDense, from 0.43 to 0.37).
The pool portfolio shows the opposite pattern: support recall rises from 0.491 to 0.594, and support F1 rises from 0.431 to 0.500.
Thus the gains come from selecting retrievers with complementary failure modes, not merely from increasing the number of retrieved passages.

\paragraph{The portfolio captures most of the available support signal.}
The red horizontal line in Figure~\ref{fig:recalls} is the recall of a method that picks the best retriever in the full candidate pool for every query.
It is, of course, not a deployable method, since it assumes knowing which retriever will work for each query, but it indicates the maximum support score available in the pool.
By $k=5$, the learned portfolio closes most of this gap on both metrics, while using only five fixed retriever configurations that can be routed or run under a bounded inference budget.

\subsection{End-to-end QA accuracy with routed portfolios}
\label{sec:results:qa}


\begin{table*}[t!]
\centering
\footnotesize
\setlength{\tabcolsep}{2.8pt}
\renewcommand{\arraystretch}{0.85}
\caption{Answer Exact-Match (EM) across datasets and retrieval strategies for Gemma-3-27B-It and Llama-3.1-70B-Instruct.
Best DS, Vendi, and GraphDense rows select the best single retriever by average performance.
The reported Vendi-only portfolio point, $(k{=}5,\ell{=}2)$, and all-pool operating points, $(k{=}4,\ell{=}2)$ and $(k{=}4,\ell{=}3)$, are selected from their routed sweeps by average EM.
Bold denotes the best value in each dataset/model column.
}
\label{tab:em}
\resizebox{\textwidth}{!}{%
\begin{tabular}{lcccccccc}
\toprule
  & \multicolumn{4}{c}{\textbf{Model: Gemma-3-27B-It}} & \multicolumn{4}{c}{\textbf{Model: Llama-3.1-70B-Instruct}} \\
\cmidrule(lr){2-5} \cmidrule(lr){6-9}
\textbf{Method}
  & \multicolumn{1}{c}{\textbf{HotpotQA}}
  & \multicolumn{1}{c}{\textbf{MusiQue}}
  & \multicolumn{1}{c}{\textbf{TriviaQA}}
  & \multicolumn{1}{c}{\textbf{\shortstack{2Wiki-\\MultiHopQA}}}
  & \multicolumn{1}{c}{\textbf{HotpotQA}}
  & \multicolumn{1}{c}{\textbf{MusiQue}}
  & \multicolumn{1}{c}{\textbf{TriviaQA}}
  & \multicolumn{1}{c}{\textbf{\shortstack{2Wiki-\\MultiHopQA}}} \\
\midrule
No retrieval                      & 0.3256 & 0.0608 & 0.6530 & 0.2264 & 0.3483 & 0.0588 & 0.6947 & 0.1920 \\
Nearest-neighbor retrieval (MPNet) & 0.3945 & 0.1291 & 0.6592 & 0.2408 & 0.4763 & 0.1394 & 0.7022 & 0.2921 \\
\addlinespace[0.4em]
Best DS Retriever                 & 0.5128 & 0.1394 & 0.6867 & 0.3544 & 0.4345 & 0.1092 & 0.6318 & 0.2443 \\
Best Vendi Retriever              & 0.5111 & 0.1427 & 0.6868 & 0.3562 & 0.4331 & 0.1121 & 0.6249 & 0.2445 \\
Best GraphDense Retriever         & 0.4628 & 0.1134 & 0.6704 & 0.3407 & 0.3978 & 0.0873 & 0.5961 & 0.2387 \\
\addlinespace[0.4em]
Vendi-RAG Adaptive ($T$=10)\  (MPNet)        & 0.2813 & 0.1253 & 0.6720 & 0.2590 & 0.4860 & 0.1990 & 0.7030 & 0.2890 \\
Vendi-RAG Adaptive ($T$=20) \ (MPNet)       & 0.2853 & 0.1307 & 0.6727 & 0.2560 & 0.4830 & 0.2060 & 0.7040 & 0.2900 \\
Vendi-Portfolio ($k=5$, $\ell=2$) (MPNet) & 0.3927 & 0.1262 & 0.6720 & 0.2341 & 0.4825 & 0.1427 & 0.6956 & 0.3080 \\
\addlinespace[0.4em]
All-pool Portfolio ($k=4$, $\ell=2$) & 0.5523 & 0.1725 & \textbf{0.6937} & 0.4052 & \textbf{0.5897} & 0.1820 & \textbf{0.7073} & 0.4139 \\
All-pool Portfolio ($k=4$, $\ell=3$) & \textbf{0.5579} & \textbf{0.1949} & 0.6898 & \textbf{0.4142} & 0.5827 & \textbf{0.2089} & 0.6859 & \textbf{0.4187} \\
\bottomrule
\end{tabular}
}
\end{table*}

We now evaluate end-to-end Exact Match (EM), where a retrieved context is fed to the answer LLM.
Table~\ref{tab:em} summarizes EM for no retrieval, nearest-neighbor retrieval, the best single retriever from each family, Vendi-RAG inference-time adaptation, a Vendi-only routed portfolio, and routed all-pool portfolios.
The single-family rows choose the best configuration by average performance.
For the routed portfolio rows, we report average-best operating points from their sweeps: Vendi-only $(k{=}5,\ell{=}2)$, and all-pool $(k{=}4,\ell{=}2)$ and $(k{=}4,\ell{=}3)$.

\paragraph{All-pool portfolios dominate the fixed-budget comparison.}
The strongest entry in every dataset/model column comes from a full-pool portfolio.
For Gemma-3-27B-It, the all-pool $(k{=}4,\ell{=}3)$ configuration has the best average EM, $0.464$, while $(k{=}4,\ell{=}2)$ is close at $0.456$.
Both exceed the best single-family retrievers selected by average performance (DS: $0.423$, Vendi: $0.424$, GraphDense: $0.397$) and nearest-neighbor retrieval ($0.356$).
For Llama-3.1-70B-Instruct, the same pattern holds: all-pool $(k{=}4,\ell{=}3)$ reaches average EM $0.474$, with $(k{=}4,\ell{=}2)$ essentially tied at $0.473$, above nearest-neighbor retrieval ($0.403$) and Vendi-RAG with $T=20$ ($0.421$).

\paragraph{Heterogeneity matters.}
The family-best DS, Vendi, and GraphDense baselines are strong single-retriever controls: each selects the best configuration within a retriever family by average EM. However, their performance varies substantially across datasets and answer models, so no single retriever family provides the best choice for all queries. The all-pool portfolio is designed to exploit this variation by selecting from the union of dense, diversity-based, and graph-based retrievers and routing each query to a small subset of portfolio members. Its consistent average gains over the family-best rows are therefore consistent with cross-family complementarity rather than merely better tuning within one family. This mirrors the retrieval-only result in Figure~\ref{fig:recalls}: average-best retriever selection identifies strong individual configurations, while the portfolio objective favors candidates that cover different query subpopulations.

\paragraph{Routing budget is a cost-accuracy knob.}
The two reported all-pool rows differ only in the number of routed portfolio members executed per query.
Increasing from $\ell=2$ to $\ell=3$ gives the best average EM for both answer models, especially on MusiQue and 2WikiMultiHopQA, while $\ell=2$ is competitive and wins some columns such as TriviaQA.
In deployment, this exposes a simple trade-off: $(k{=}4,\ell{=}3)$ is the highest-accuracy operating point in the table, while $(k{=}4,\ell{=}2)$ preserves most of the gain with fewer answer generations and judge comparisons.

\vspace{-0.2em}
\subsection{Efficiency-accuracy trade-offs}
\label{sec:results:efficiency}

\begin{figure*}[t!]
  \centering

  \includegraphics[width=0.8\textwidth]{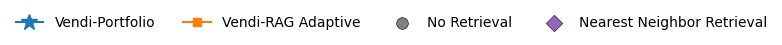}
  \vspace{0.15em}

  \begin{subfigure}[t]{0.42\textwidth}
    \centering
    \includegraphics[width=\linewidth]{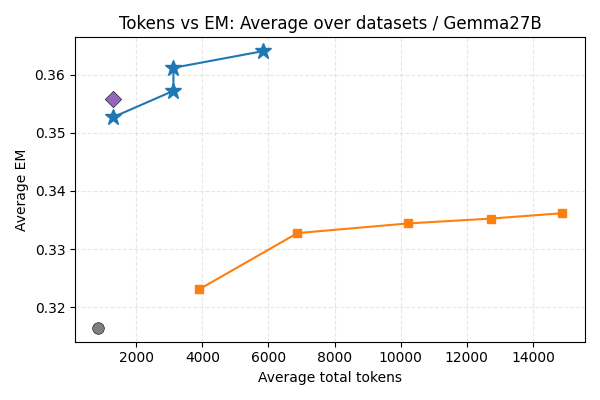}
    \caption{Tokens vs.\ EM (Gemma27B)}
    \label{fig:tokens_em_gemma}
  \end{subfigure}\hfill
  \begin{subfigure}[t]{0.42\textwidth}
    \centering
    \includegraphics[width=\linewidth]{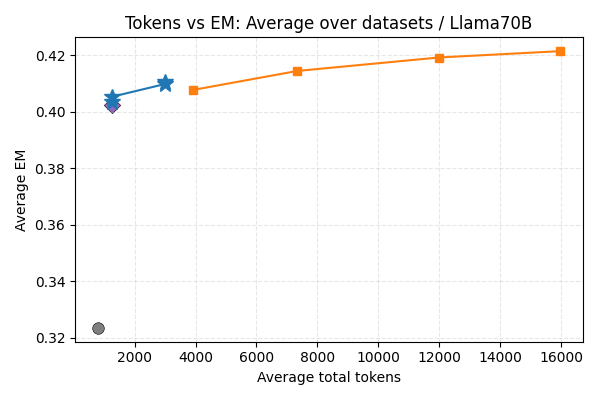}
    \caption{Tokens vs.\ EM (Llama70B)}
    \label{fig:tokens_em_llama}
  \end{subfigure}

  \vspace{0.6em}

  \begin{subfigure}[t]{0.42\textwidth}
    \centering
    \includegraphics[width=\linewidth]{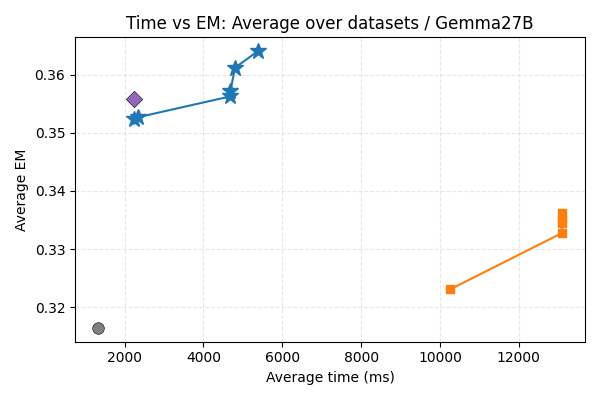}
    \caption{Time vs.\ EM (Gemma27B)}
    \label{fig:time_em_gemma}
  \end{subfigure}\hfill
  \begin{subfigure}[t]{0.42\textwidth}
    \centering
    \includegraphics[width=\linewidth]{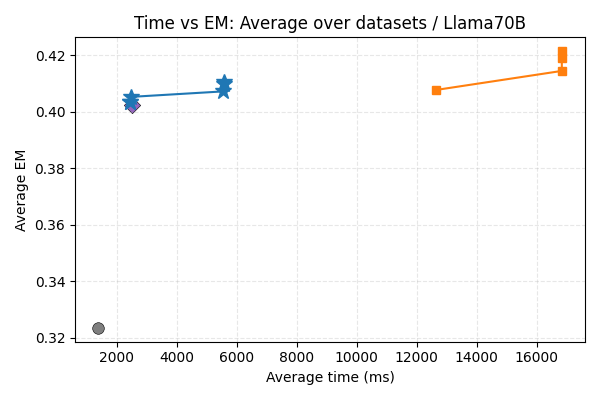}
    \caption{Time vs.\ EM (Llama70B)}
    \label{fig:time_em_llama}
  \end{subfigure}

  \caption{Efficiency--accuracy trade-offs. Row 1: total generated tokens vs.\ EM. Row 2: wall-clock time vs.\ EM.
  Columns correspond to Gemma-3-27B-It and Llama-3.1-70B-Instruct.}
  \label{fig:efficiency_grid_shared_legend}
\end{figure*}

Accuracy alone is insufficient for deployable RAG systems; adaptation must also be efficient.
For this comparison, we intentionally do \emph{not} use the heterogeneous all-pool portfolios from Figure~\ref{fig:recalls}.
Instead, to make the comparison with Vendi-RAG as direct as possible, we isolate a single embedding backbone (MPNet) and a single retriever family (Vendi).
The portfolio pool contains only Vendi retrievers that differ in the diversity parameter $s$.
This is exactly the one-dimensional hyperparameter that Vendi-RAG attempts to tune adaptively at inference time.
Thus Figure~\ref{fig:efficiency_grid_shared_legend} should be read as an apples-to-apples comparison between two ways of using the same Vendi search space: selecting a small fixed portfolio of $s$ values offline versus searching over $s$ sequentially at inference time.
It is not intended as a full comparison between Vendi-RAG and our largest heterogeneous portfolios.

\paragraph{Offline Vendi portfolios are more efficient in the controlled comparison.}
Under this restricted MPNet/Vendi setting, fixed Vendi portfolios occupy a better region of the cost-accuracy space:
they achieve comparable EM to Vendi-RAG in the low-to-moderate cost regime while using substantially fewer tokens
and lower wall-clock time (Figure~\ref{fig:efficiency_grid_shared_legend}).
On Gemma, the portfolio is strictly preferable in this comparison, achieving higher EM at lower cost.
On Llama, Vendi-RAG can reach slightly higher EM, but only after paying a large increase in tokens and latency.

\paragraph{Why portfolios are cheaper.}
The search space is the same in this comparison; the difference is when and how it is searched.
Vendi-RAG performs \emph{sequential} per-query search over $s$: it iterates retrieval, generation and judging, and
\emph{each step depends on the previous step's judged output}.
In contrast, portfolio learning amortizes the search offline by choosing a small set of complementary $s$ values.
At inference time, routing over the fixed Vendi portfolio requires only a bounded number of retrievals and LLM calls,
and these calls can be parallelized across the selected portfolio members.
This yields more predictable serving costs and enables practical deployment under latency/token constraints.

\paragraph{Summary.}
Taken together, Figure~\ref{fig:recalls} shows that portfolio learning yields complementary retrievers that better cover heterogeneous
queries, Table~\ref{tab:em} shows that these gains translate into improved end-to-end QA (especially for Gemma), and Figure~\ref{fig:efficiency_grid_shared_legend} shows
that, in the controlled Vendi-only setting, offline portfolios provide a substantially better cost--accuracy trade-off than inference-time tuning over the same $s$ grid.

\section*{Impact Statement}

This work presents a principled framework for query-adaptive information retrieval, advancing the field of Retrieval-Augmented Generation (RAG). No specific malicious use of this technology is expected beyond the general risks associated with the deployment of large-scale generative models.

\section*{Acknowledgements}
Miltiadis Stouras and Ola Svensson are supported by the Swiss State Secretariat for Education, Research and Innovation (SERI) under contract number MB22.00054.


\bibliography{bibliography}
\bibliographystyle{icml2026}

\newpage
\appendix
\onecolumn

\section{Router Ablations}
\label{app:router-ablations}

We ablate the learned router on the all-pool portfolio by comparing its ranked
predictions against three references: a random portfolio member, the first
portfolio member, and the oracle maximum over the portfolio. The first member is
the size-one portfolio selected by the greedy objective, and therefore
corresponds to the strongest single-retriever operating point within this
portfolio prefix. The recall curve is shared across answer models; the EM curves
use the corresponding answer model.

\begin{figure}[H]
  \centering

  \includegraphics[width=0.95\textwidth]{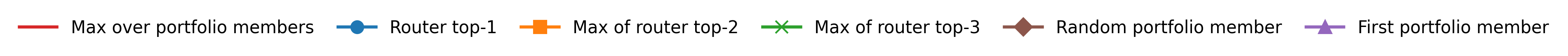}
  \vspace{0.25em}

  \begin{subfigure}[t]{0.55\textwidth}
    \centering
    \includegraphics[width=\linewidth]{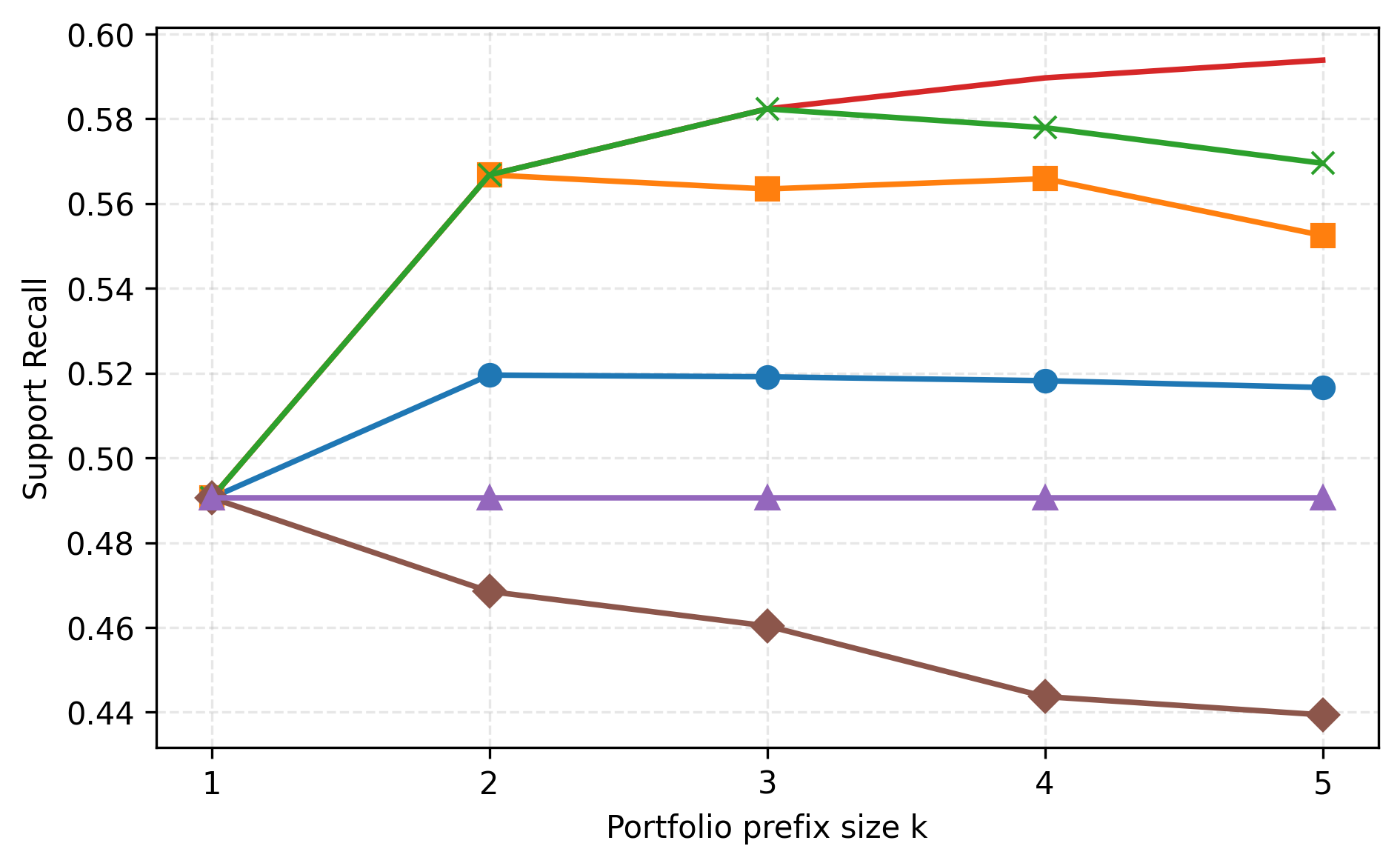}
    \caption{Support recall, common to both answer models.}
    \label{fig:router_ablation_recall}
  \end{subfigure}

  \vspace{0.5em}

  \begin{subfigure}[t]{0.48\textwidth}
    \centering
    \includegraphics[width=\linewidth]{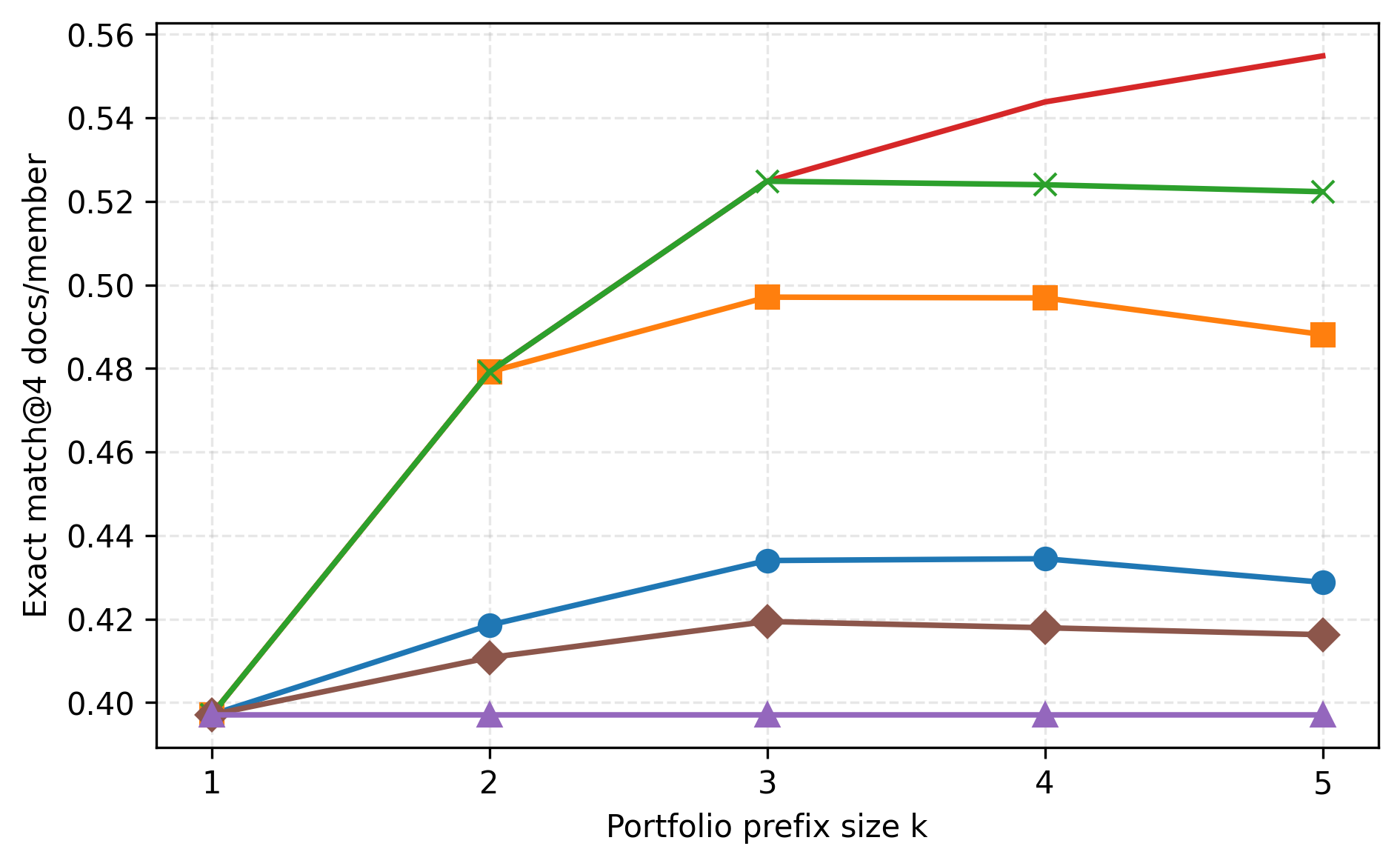}
    \caption{Exact match with Gemma-3-27B-It.}
    \label{fig:router_ablation_gemma_em}
  \end{subfigure}\hfill
  \begin{subfigure}[t]{0.48\textwidth}
    \centering
    \includegraphics[width=\linewidth]{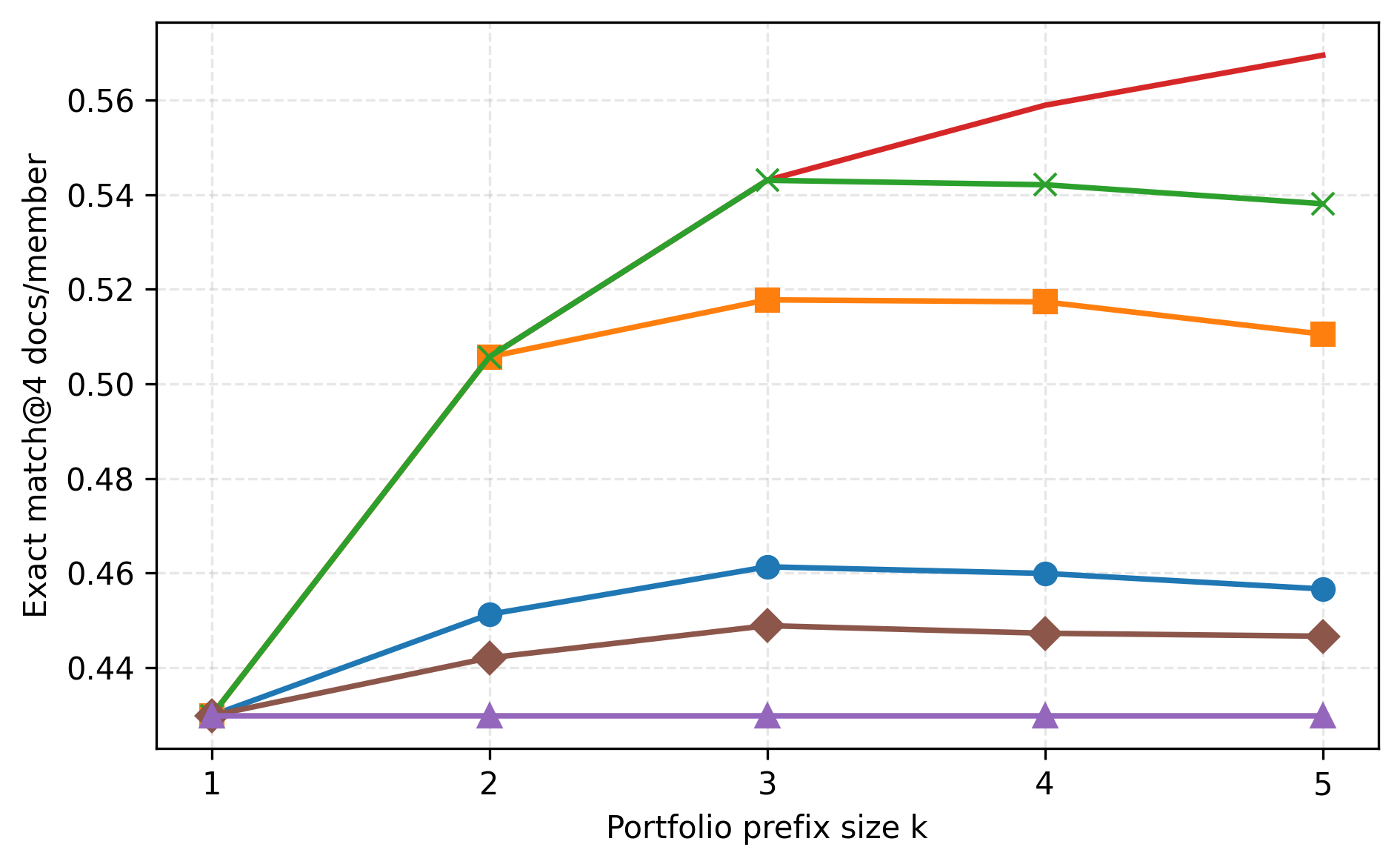}
    \caption{Exact match with Llama-3.1-70B-Instruct.}
    \label{fig:router_ablation_llama_em}
  \end{subfigure}

  \caption{Router ablations for the all-pool portfolio with $4$ documents per
  portfolio member. Curves are averaged over HotpotQA, MusiQue, TriviaQA, and
  2WikiMultiHopQA. The shared legend applies to all panels.}
  \label{fig:router_ablations}
\end{figure}

The router clearly improves over random selection from the portfolio. In the
recall panel, random selection becomes worse as the portfolio grows because it
often chooses a member that is not suitable for the current query. The learned
router avoids this failure mode: even its top-1 prediction stays well above the
random-member baseline for all nontrivial portfolio sizes, and the same pattern
appears in both EM panels.

The top-1 router prediction also consistently improves over the single best
retriever once $k>1$. This shows that the router is not merely defaulting to the
best average member. It uses the query to select different retrievers for
different instances, yielding higher support recall and higher downstream EM
than always using the first portfolio member.

Finally, the top-2 and top-3 curves are close to the oracle maximum over the
portfolio. This indicates that the router often assigns high mass to the
retrievers that are correct for the query, even when its first prediction is not
the best one. This motivates the final selection layer: by executing the top
$\ell$ routed members and using the selector LLM to choose among their answers,
the pipeline can exploit the router's calibrated short list rather than relying
only on a single argmax decision.

\section{Implementation Details}
\label{sec:app-implementation}

This appendix provides additional implementation details needed to reproduce our
retriever-portfolio experiments (Section~\ref{sec:experimental-setup}).
Unless stated otherwise, all experiments use dot-product similarity on
$\ell_2$-normalized embeddings, so dot product coincides with cosine similarity.

\subsection{Corpus preprocessing and FAISS indexing}
\label{app:indexing}

\paragraph{Chunking.}
For each dataset, we build a retrieval corpus by collecting all documents
available in the dataset release (typically the union of train and test
contexts). We split each document into overlapping chunks (``text units'') using
a token-based sliding window with chunk size $512$ and overlap $50$ tokens. Each
text unit stores its \texttt{doc\_id}, chunk id, and text; dense embeddings are
attached when building the corresponding backbone-specific index.

\paragraph{Embeddings.}
We use two dense embedding backbones: MPNet
(\texttt{sentence-transformers/all-mpnet-base-v2}) and E5
(\texttt{intfloat/e5-large-v2}).
For each backbone, we embed both text units and queries with the corresponding
SentenceTransformers encoder and $\ell_2$-normalize their embeddings.
For E5, we use the standard role prefixes: \texttt{query:} for questions and
\texttt{passage:} for chunks.

\paragraph{FAISS indices.}
We build one FAISS IndexFlatIP \citep{faiss} (exact maximum inner-product
search) per dataset and dense backbone.
For MPNet-based retrievers we search the MPNet index with MPNet query
embeddings; for E5-based retrievers we search the E5 index with E5 query
embeddings.
Given a query/backbone pair, we first retrieve a prefiltered candidate set of
size $M$ (we use $M=1000$ in all main experiments), then apply the
retriever-specific selection rule over these candidates to return the final top
$K$ chunks (we use $K=4$ throughout).

\paragraph{Graph sidecar.}
Graph-dense retrievers are a simple graph-retriever variant inspired by
\cite{graphrag}. They use the same chunk identifiers as the dense indices, but
candidate generation is driven by a dataset-level entity--chunk graph rather
than by FAISS prefiltering.
The graph sidecar stores entity-to-chunk and chunk-to-entity adjacency lists and
can attach chunk embeddings from either dense backbone for final dense reranking.

\paragraph{Caching.}
To amortize evaluation over large retriever pools, we cache:
(i) query embeddings separately for each split and backbone, (ii) for each
query/backbone pair, its top-$M$ FAISS candidates (including the candidate
embeddings), and (iii) graph query-entity lists for graph-dense retrieval.
Candidate retrieval outputs and score matrices are cached per family/backbone
pool and reused across all retriever configurations.

\subsection{DiscountedSimilarity retriever}
\label{app:ds}

DiscountedSimilarity (DS) is a greedy diversification heuristic that starts
from standard query-chunk similarity, but down-weights candidates that are too
similar to already-selected chunks.
We instantiate DS separately for each dense backbone; the query embedding,
candidate embeddings, and prefiltered candidate set always come from the same
backbone-specific cache.

Let $\mathcal{C}=\{x_1,\dots,x_M\}$ be the prefiltered candidate embeddings for a
query $q$, and let initial scores be $s_i \leftarrow q^\top x_i$. DS iteratively
constructs a set $S$ of size $k$. After selecting a chunk $x_{i^\star}$, it
computes similarities $\sigma_j \leftarrow x_j^\top x_{i^\star}$ for remaining
candidates and applies a multiplicative discount to those above a threshold
$r$:
\[
s_j \;\leftarrow\; s_j \cdot \exp(-\gamma \, \sigma_j)
\quad\text{for all } j \text{ with } \sigma_j \ge r,
\]
where $\gamma \ge 0$ controls discount strength and $r \in [0,1]$ controls which
candidates are considered ``too similar''.

\begin{algorithm}[t!]
\caption{DiscountedSimilarity retrieval (DS)}
\label{alg:ds}
\begin{algorithmic}[1]
\STATE \textbf{Input:} query embedding $q$, candidate embeddings $\{x_i\}_{i=1}^M$, parameters $(\gamma,r)$, output size $K$
\STATE Initialize scores $s_i \leftarrow q^\top x_i$ for all $i$; $S \leftarrow [\,]$
\FOR{$t=1$ to $K$}
  \STATE $i^\star \leftarrow \arg\max_i s_i$
  \STATE Append $i^\star$ to $S$
  \STATE For each remaining $j$, compute $\sigma_j \leftarrow x_j^\top x_{i^\star}$
  \STATE For each remaining $j$ with $\sigma_j \ge r$, update $s_j \leftarrow s_j \cdot \exp(-\gamma \sigma_j)$
  \STATE Set $s_{i^\star} \leftarrow -\infty$ (remove selected item)
\ENDFOR
\STATE \textbf{Return:} indices $S$
\end{algorithmic}
\end{algorithm}

\paragraph{Naive baseline.}
For each backbone, the DS pool includes a ``naive'' configuration
$(\gamma=0, r=1)$, which reduces to standard top-$k$ dense retrieval by
similarity (no discounting in practice).

\subsection{Vendi retriever}
\label{app:vendi}

The Vendi retriever trades off query relevance with intra-set diversity via a
Vendi-score objective \citep{rezaei2025vendi}. Given a set of $n$ selected
normalized embeddings $X_S \in \mathbb{R}^{n \times d}$, define the kernel
$K_S = X_S X_S^\top$ and let $\lambda_1,\dots,\lambda_n$ be eigenvalues of
$K_S/n$. Define normalized weights $w_i = \lambda_i / \sum_j \lambda_j$ and
their entropy $H(w) = -\sum_i w_i \log w_i$. The Vendi score is
\[
\Vendi(S) \;=\; \textnormal{exp}(H(w)).
\]
Intuitively, $\Vendi(S)$ behaves like an ``effective number of distinct items'',
ranging from $1$ (highly redundant) to $|S|$ (maximally diverse).

\paragraph{Greedy selection with trade-off $s$.}
For a query $q$ and trade-off parameter $s\in[0,1]$, we greedily construct
$S$ by selecting, at each step, the candidate that maximizes:
\[
\text{obj}(x) \;=\; s \cdot \Vendi(S \cup \{x\}) \;+\; (1-s)\cdot \sum_{y\in S\cup\{x\}} q^\top y.
\]
Thus larger $s$ emphasizes diversity, while smaller $s$ emphasizes dense
relevance.
In our batched implementation (Section~\ref{app:batch}), the first choice is the
most relevant item because $\Vendi(\{x\})$ is constant over singleton sets and
the prefiltered candidate order is relevance-sorted.
Subsequent steps follow the objective above.

\begin{algorithm}[t!]
\caption{Vendi retrieval (greedy)}
\label{alg:vendi}
\begin{algorithmic}[1]
\STATE \textbf{Input:} query embedding $q$, candidate embeddings $\{x_i\}_{i=1}^M$ (normalized), trade-off $s\in[0,1]$, output size $K$
\STATE Compute relevance scores $a_i \leftarrow q^\top x_i$
\STATE $i_1 \leftarrow \arg\max_i a_i$; $S \leftarrow [i_1]$; $A \leftarrow a_{i_1}$
\FOR{$t=2$ to $K$}
  \STATE For each remaining candidate $i$, compute $\text{obj}(i)= s\cdot \Vendi(S\cup\{i\}) + (1-s)\cdot (A+a_i)$
  \STATE $i^\star \leftarrow \arg\max_i \text{obj}(i)$; append $i^\star$ to $S$
  \STATE $A \leftarrow A + a_{i^\star}$
\ENDFOR
\STATE \textbf{Return:} indices $S$
\end{algorithmic}
\end{algorithm}

\subsection{Graph-dense retriever}
\label{app:graphdense}

The graph-dense retriever is a simple graph-retriever variant inspired by
\cite{graphrag}. It uses a dataset-level entity--chunk graph for candidate
generation and a dense encoder only for the final reranking step.

\paragraph{Graph construction.}
For each dataset, we run entity extraction on document titles and corpus chunks.
Entities are normalized by lowercasing, collapsing whitespace, and stripping
surrounding punctuation. The graph sidecar stores two adjacency maps:
(i) \texttt{entity\_to\_chunk\_keys}, which maps each normalized entity to the
ordered list of chunks that mention it, and (ii) \texttt{chunk\_to\_entities},
which maps each chunk key $(\texttt{doc\_id},\texttt{chunk\_id})$ to its
normalized entities. Title entities are merged into every chunk from the same
document. The sidecar also stores MPNet and E5 chunk-embedding matrices aligned
to the same chunk-key order.

\paragraph{Query-time retrieval.}
For each train/test split, query entities are extracted once and cached in
question order. At retrieval time, the graph-dense retriever starts from the
cached query entities, alternates entity and chunk frontiers, filters overly
frequent entities, caps the number of graph candidates, and reranks the
resulting chunks by dense similarity under the selected backbone.

\begin{algorithm}[t!]
\caption{Graph-dense retrieval}
\label{alg:graphdense}
\begin{algorithmic}[1]
\STATE \textbf{Input:} $q$, $E_0$, graph $G$, backbone $b$, limits $(H,D,C)$, output size $K$
\STATE $E \leftarrow$ normalized unique entities from $E_0$; $A\leftarrow[\,]$
\STATE $V_E,V_C\leftarrow\emptyset$ \hfill \textit{visited entities and chunks}
\FOR{odd hop $h\leq H$}
  \STATE $B\leftarrow[\,]$ \hfill \textit{new chunks reached in this hop}
  \FOR{entity $e$ in $E$ with $e\notin V_E$ and $1\leq\mathrm{df}(e)\leq D$}
    \STATE Add $e$ to $V_E$
    \FOR{chunk $c$ mentioning $e$, in graph order}
      \IF{$c\notin V_C$}
        \STATE Append $c$ to $B$ and $A$; add $c$ to $V_C$
      \ENDIF
      \IF{$|A|=C$}
        \STATE break
      \ENDIF
    \ENDFOR
    \IF{$|A|=C$}
      \STATE break
    \ENDIF
  \ENDFOR
  \IF{$|A|=C$ or $h=H$}
    \STATE break
  \ENDIF
  \STATE Set $E$ to the normalized, unique, unvisited entities in $B$ with $1\leq\mathrm{df}(e)\leq D$
  \IF{$E$ is empty}
    \STATE break
  \ENDIF
\ENDFOR
\IF{$A$ is empty}
  \STATE \textbf{return} $[\,]$
\ENDIF
\STATE Embed $q$ with backbone $b$ and load backbone-$b$ embeddings for chunks in $A$
\STATE Rank $c\in A$ by inner product $x_c^{(b)\top} q^{(b)}$
\STATE \textbf{Return:} the top-$K$ ranked chunks
\end{algorithmic}
\end{algorithm}

\paragraph{Hyperparameters.}
The graph-dense family has four hyperparameters:
\texttt{embedder} selects the dense reranking backbone, either MPNet or E5;
\texttt{max\_hops} ($H$) is the graph-expansion depth, where odd hops add
chunks from the current entity frontier and even hops discover new entities
from the current chunk frontier; \texttt{max\_entity\_df} ($D$) is a
document-frequency filter that skips entities appearing in more than $D$ chunks; and
\texttt{max\_candidates} ($C$) is the maximum number of graph candidate chunks
kept before dense reranking. If no query entities or no graph candidates are
available, the retriever returns no chunks in our graph-dense pool evaluation.

\subsection{Candidate retriever pools}
\label{app:pools}

We form structured candidate pools by sweeping hyperparameters and embedding
backbones. The main all-pool portfolio concatenates the catalogued pools
\texttt{ds@mpnet}, \texttt{ds@e5}, \texttt{vendi@mpnet},
\texttt{vendi@e5}, and \texttt{graph\_dense@mixed}.

\paragraph{DS pools.}
For each of MPNet and E5, we sweep
$\gamma \in \{0.2,0.4,0.6,0.8,1.0,1.2,1.4,1.6,1.8,2.0,4,6,8,10\}$ and
$r \in \{0.0,0.1,\dots,0.9\}$, yielding $14\times 10=140$ diversified DS
configurations. We add the naive setting $(0,1)$ for each backbone, for a total
of $141$ DS retrievers per backbone.

\paragraph{Vendi pools.}
For each of MPNet and E5, we sweep
$s \in \{0.0, 0.05, 0.10, \dots, 0.95, 1.0\}$, giving $21$ Vendi retrievers per
backbone.

\paragraph{Graph-dense pool.}
Graph-dense candidates use the retriever in \cref{app:graphdense}. We sweep
the reranking backbone in $\{\text{MPNet},\text{E5}\}$,
\[
\texttt{max\_hops}\in\{1,3,5\},\qquad
\texttt{max\_entity\_df}\in\{100,300,500\},\qquad
\texttt{max\_candidates}\in\{1000,2000\}.
\]
This gives $2\times 3\times 3\times 2=36$ graph-dense variants. The pool is
stored under a mixed graph-dense artifact key because one graph family contains
both MPNet-reranked and E5-reranked variants. Individual variants are named
\texttt{\{embedder\}\_h\{max\_hops\}\_df\{max\_entity\_df\}\_c\{max\_candidates\}}.

Altogether, the all-pool candidate set contains $360$ retrievers.

\subsection{Efficient batched evaluation over retriever pools}
\label{app:batch}

Evaluating all retriever configurations for all queries can be dominated by
repeated reranking and greedy selection. We therefore batch computation within
each family/backbone pool and reuse cached artifacts whenever possible.

\paragraph{Backbone-specific prefiltering.}
For each query and dense backbone, we run FAISS once to obtain top-$M$
candidates and reuse this fixed set across all DS and Vendi configurations that
use that backbone. This eliminates repeated nearest-neighbor search when
sweeping hyperparameters while keeping MPNet and E5 artifacts separate.

\paragraph{BatchDiscountedSimilarity.}
Given a fixed candidate set, we precompute the candidate embedding matrix
$X\in\mathbb{R}^{M\times d}$ and the base similarity vector $a=Xq$ once. For a batch
of $B$ DS configurations $\{(\gamma_b,r_b)\}_{b=1}^B$, we maintain a score matrix
$S\in\mathbb{R}^{B\times M}$ initialized as $S_{b,i}=a_i$, and update all $B$
rows in parallel when discounting around each selected item.
These matrix operations are highly parallelizable and can be significantly accelerated via GPU kernels, 
providing a large speedup when $B$ is large (e.g., 141 DS configurations per
backbone).

\paragraph{BatchVendiRetriever.}
For a batch of $B$ Vendi configurations (different $s$ values), we share:
(i) relevance scores $a_i = q^\top x_i$, and (ii) the candidate-candidate similarity kernel
$K = XX^\top$. Since $k$ is small ($M=1000$) and we only ever compute the Vendi
score on subsets of size up to $k=4$, the incremental cost is dominated by
eigendecompositions of $3\times 3$ and $4\times 4$ matrices, which is negligible
compared to FAISS and LLM calls.

\paragraph{BatchGraphDenseRetriever.}
For graph-dense retrieval, we cache query entities and expand the graph once per
query up to the maximum hop, entity-frequency, and candidate-count limits used
by the pool. Each graph-dense hyperparameter setting is then evaluated by
filtering this shared expansion context and reranking its candidate chunks with
the requested dense backbone. When both MPNet and E5 variants are present, the
query embeddings are loaded as a dictionary keyed by backbone and the same graph
candidate set is reranked with the matching chunk-embedding matrix.

\subsection{LLM prompting and answer selection}
\label{app:prompts}

We use the same prompt templates across methods to ensure fair comparison.

\paragraph{Answer prompt (Answer LLM).}
Given a question and the retrieved passages, the answerer is instructed to be
\emph{evidence-first} and to output a short final answer inside an
\texttt{<answer>...</answer>} tag, along with a brief structured explanation
indicating whether the answer is supported by the passages (\texttt{mode=evidence})
or a fallback guess (\texttt{mode=guess}). This structured output is used by the
judge during selection.

The exact system \& user prompts that we are using are the following.

\begin{promptbox}{Answer Prompt (System)}
You are a helpful question-answering assistant.
Your objective is to produce the best SHORT answer.

Core rules:
1) Evidence-first: If the passages contain explicit evidence that entails the answer, use it. Prefer statements that are specific and unambiguous; favor answers supported by multiple passages.
2) No evidence -> best-guess: If the passages are irrelevant, too vague, or do not entail an answer, give your best-guess from your general knowledge, but mark mode='guess'.
3) Never contradict the passages: If any passage clearly contradicts your prior knowledge, trust the passages unless they are clearly off-topic (irrelevant to the question). Do not invent unsupported details.
4) Be concise: The answer must be a single word, name, date, number, or very short phrase.
5) Always put the final answer inside <answer>...</answer> tags.

Conflict handling:
- If passages disagree, pick the answer with the strongest explicit support (more passages, clearer wording). 
- If the evidence is ambiguous, output your best guess but mark the mode as 'guess' and explicitly mention this in your explanation.
- If multi-hop reasoning is needed, combine facts across passages explicitly.

Output format (exactly these two blocks, in this order):
<explanation mode="evidence|guess" used_passages="[comma-separated indices]">
Brief 1-2 sentence justification referencing passage indices (e.g., 'P2 states X; P4 confirms Y').
</explanation>
<answer>YOUR SHORT ANSWER</answer></final>
\end{promptbox}

\begin{promptbox}{Answer Prompt (User)}
Here are a few examples of valid outputs:
Example question: In what year was Google founded?
Example answer (evidence mode):
<explanation mode="evidence" used_passages="[1,3]">
P1 names the founder; P3 gives the company's founding year.
</explanation>
<answer>1998</answer></final>

Example question: Is Saturn larger than Jupiter?
Example answer (guess mode, no usable evidence in passages):
<explanation mode="guess" used_passages="[]">
No passage compares the sizes; providing best-guess from general knowledge.
</explanation>
<answer>No</answer></final>

Example question: Is Kyoto the capital of Japan?
Example answer (evidence mode):
<explanation mode="evidence" used_passages="[0]">
P0 states that Tokyo is Japan's capital city, so the answer is no.
</explanation>
<answer>No</answer></final>

Example question: Who has scored the most points in NBA history?
Example answer (evidence mode):
<explanation mode="evidence" used_passages="[1]">
P1 explicitly states that LeBron James is the NBA's all-time leading scorer.
</explanation>
<answer>LeBron James</answer></final>

Example question: Which is larger, Saturn or Jupiter?
Example answer (guess mode, no usable evidence in passages):
<explanation mode="guess" used_passages="[]">
No passage compares sizes; providing best-guess from general knowledge.
</explanation>
<answer>Jupiter</answer></final>
========
Now answer the following question.
Question:
{QUESTION}

Context Passages:
Passage 0: {PASSAGE_0}

Passage 1: {PASSAGE_1}

... (repeat Passage blocks as needed) ...

========
Your task:
- Read the passages and determine whether they explicitly support an answer.
- If yes, answer in EVIDENCE mode and cite the passage indices you used.
- If no, answer in GUESS mode (best-guess from your knowledge).
- Do NOT repeat the question. Keep the answer minimal. No extra text outside the required blocks.
- ALWAYS conclude your answer with a </final> tag and write nothing after it!
======
\end{promptbox}

\paragraph{Selection prompt (Selector LLM).}
When we evaluate multiple retrievers per query ($\ell>1$), we run the answerer
once per retriever and then prompt a judge model with only the question and the
candidate answers (including their structured explanations). The judge selects
the single most trustworthy candidate and outputs the chosen answer in a
\texttt{<judge>...</judge>} tag.

The exact system and user prompts that we used are the following.

\begin{promptbox}{Selector Prompt (System)}
You are a question-answering judge.
- You will be given a question and several candidate answers.
- Each answer includes an explanation and a mode (evidence or guess).
- Your task is to select the SINGLE answer that is most trustworthy.
- Prefer answers in evidence mode with clear, well-structured explanations.
- If no evidence-mode answers are persuasive, you may choose a guess-mode answer, but only if the explanation is reasonable and consistent with the question.
- Never invent a new answer: always pick one of the candidates.
- Provide a short justification (2-5 sentences) comparing candidates, referencing their mode and the quality of their explanations.

STRICT FORMAT:
- End with a single line ONLY: <judge>ACTUAL_ANSWER</judge></final>
- IMPORTANT: Replace ACTUAL_ANSWER with the real chosen answer.
- Do NOT output the word ACTUAL_ANSWER literally.
- Examples of correct final lines:
    <judge>1998</judge></final>
    <judge>Eiffel Tower</judge></final>
\end{promptbox}

\begin{promptbox}{Selector Prompt (User)}
Question:
{QUESTION}

Candidate Answers:
======
Answer 0:
{ANSWER_0}
------
Answer 1:
{ANSWER_1}
------
Answer 2:
{ANSWER_2}
------
... (repeat Answer blocks as needed) ...
======
Your task:
- Compare the answers using their explanations and mode.
- Write a 2-5 sentence justification.
- END with one line ONLY: <judge>ACTUAL_ANSWER</judge></final>
- IMPORTANT: Replace ACTUAL_ANSWER with the real chosen answer.
\end{promptbox}

\subsection{Evaluation metrics}
\label{app:metrics}

\paragraph{Retrieval Metrics} We evaluate retrieval performance against a set of \textbf{reference supporting documents} that are provided for each question in the datasets. Since retrieval operates at the chunk level, a retrieved chunk is considered a ``hit'' if its document ID matches one of the \textbf{reference document} identifiers. We report document-level performance using the following metrics at a fixed budget of $n$ retrieved items:

\begin{itemize}
    \item \textbf{Support Recall@$k$}: The fraction of reference documents successfully identified within the top $k$ retrieved results.
    \item \textbf{Support F1@$k$}: The harmonic mean of precision and recall at budget $k$. This provides a balanced measure that accounts for both the accuracy (precision) of the retrieved chunks and the completeness (recall) of the retrieved support set.
\end{itemize}

\paragraph{End-to-end QA.} 
We report Exact Match (EM) scores for the final selected answer. A prediction is considered correct if, after standard normalization (lowercasing, stripping
punctuation and articles), it exactly matches any of the reference ground-truth answers. To characterize the efficiency-accuracy trade-offs of our approach, we further measure the total tokens generated and the end-to-end wall-clock latency of the routed pipeline.

\paragraph{Retrieve-more controls.}
To separate portfolio complementarity from simply increasing context length, we
also evaluate single-retriever controls that retrieve a larger number of
documents under the same answer-generation prompt format. These controls use
one retriever configuration rather than a portfolio and therefore do not benefit
from best-of-$k$ coverage across complementary retrieval behaviors.

\subsection{Router model and training}
\label{app:router}

We train a query router that maps each question $q$ to a ranking over the $k$ retrievers in a learned portfolio.

\paragraph{Inputs.}
Each question $q$ is represented by:
(i) its raw text (tokenized for a T5 encoder),
(ii) a dense MPNet query embedding
$e_{\text{mp}}(q)\in\mathbb{R}^{d_{\text{mp}}}$, and
(iii) a dense E5 query embedding
$e_{\text{e5}}(q)\in\mathbb{R}^{d_{\text{e5}}}$.
The dense query embeddings are loaded from the same per-backbone caches used by
retrieval. The router also carries binary masks for the dense modalities so a
missing embedding can be zeroed after projection.

\paragraph{Architecture.}
Let $h_{\text{t5}}(q)\in\mathbb{R}^{d_{\text{t5}}}$ denote the pooled
\textsc{Flan-T5-large} encoder output.
We pool by averaging the encoder hidden states across tokens using the attention mask.
The T5 encoder is frozen throughout router training.

We project the text representation and both dense embeddings to a common
dimension $d=256$:
\[
\tilde{h}(q) = \phi_h\!\left(W_h\,h_{\text{t5}}(q)\right),\qquad
\tilde{m}(q) = m_{\text{mp}}\phi_{\text{mp}}\!\left(W_{\text{mp}}\,e_{\text{mp}}(q)\right),\qquad
\tilde{e}(q) = m_{\text{e5}}\phi_{\text{e5}}\!\left(W_{\text{e5}}\,e_{\text{e5}}(q)\right),
\]
where each $\phi(\cdot)$ is a linear layer followed by ReLU, dropout, and
LayerNorm, and $m_{\text{mp}},m_{\text{e5}}\in\{0,1\}$ are modality masks.

We fuse the three projected vectors with a small MLP and LayerNorm:
\[
z(q)=\mathrm{LayerNorm}\!\Big(\mathrm{MLP}\big([\tilde{h}(q);\tilde{m}(q);\tilde{e}(q)]\big)\Big)\in\mathbb{R}^{256},
\qquad
\hat z(q)=\frac{z(q)}{\|z(q)\|_2}.
\]

The router learns one trainable prototype vector per retriever in the portfolio, $\{v(r)\in\mathbb{R}^{256}\}_{r=1}^k$,
and uses cosine similarity scores:
\[
\textnormal{sim}(q,r)=\cos\!\big(\hat z(q), \hat v(r)\big)
\quad\text{where}\quad
\hat v(r)=\frac{v(r)}{\|v(r)\|_2}.
\]
At inference time, the router ranks retrievers by $\textnormal{sim}(q,r)$ and
executes either the argmax retriever or the top-$\ell$ retrievers, depending on
the evaluation budget.

\paragraph{Training supervision from retrieval recall.}
For each training question $q$, we precompute a vector of retrieval scores
$\mathrm{Rec}@K(q)\in\mathbb{R}^k$, one value per portfolio member, where $K$ is the number of documents retrieved for evaluation.
We define the \emph{winner set} as the set of retrievers achieving the maximum recall on that instance:
\[
W(q)=\Big\{r:\mathrm{Rec}@K(q,r)=\max_{r'}\mathrm{Rec}@K(q,r')\Big\}.
\]
Questions for which $\max_r \mathrm{Rec}@K(q,r)=0$ provide no routing signal; we exclude them from training/evaluation.
To avoid unstable supervision from very large ties, we additionally ignore questions with $|W(q)|>T$ (default $T=3$).

\paragraph{Contrastive argmax objective (multi-positive).}
Let $p(r\mid q)=\mathrm{softmax}_r\big(s(q,r)\big)$ be the distribution induced by the cosine scores.
We minimize a multi-positive contrastive argmax loss that concentrates probability mass on the winner set:
\[
\mathcal{L}(q)
=
-\frac{1}{|W(q)|}\sum_{r\in W(q)}\log p(r\mid q).
\]
This treats all tied-best retrievers as positives and encourages the router to assign them high likelihood.

\paragraph{Balanced batching.}
To reduce dominance of frequent single-winner cases, we bin training questions by their winner structure
(e.g., ``$R_i$-unique'' when retriever $i$ is the unique winner, and optional tie bins such as tie2/tie3),
and sample batches in a round-robin manner across non-empty bins.

\paragraph{Optimization and schedule.}
We keep the T5 encoder frozen and optimize only the dense projection heads,
fusion MLP, and retriever prototype vectors.
Checkpoints store only non-T5 weights.
We optimize with AdamW using a single learning rate for all trainable
parameters.
Model selection is performed by highest dev recall@K under $\arg\max_r s(q,r)$ routing.

\paragraph{Default hyperparameters.}
Unless otherwise stated, we use:
\begin{itemize}
    \item Epochs: $10$.
    \item Batch size: $64$; gradient accumulation: $1$ step; gradient clipping: $1.0$.
    \item Learning rate: $3\times 10^{-4}$ for all trainable non-T5 parameters.
    \item Max input length: $256$ tokens.
    \item Max tie size: $T=3$ (ignore questions $q$ with $|W(q)|>T$).
\end{itemize}

\subsection{Vendi-RAG baseline details}
\label{app:vendirag}

For the Vendi-RAG baseline \citep{rezaei2025vendi}, our implementation follows the
one described in \citet{rezaei2025vendi}. We implement a $T$-step
adaptive loop over the Vendi trade-off parameter $s$:
starting from an initial $s_0 = 0.8$, we repeatedly (i) retrieve with Vendi at $s_t$,
(ii) answer with the same answer LLM used elsewhere, (iii) score the answer with
a judge LLM, and (iv) update $s$ for the next iteration. Our update rule maps the
judge score (clamped to $[0,10]$) to $s_{t+1}=1-\text{score}/10$, i.e., lower
judged answer quality increases $s$ and shifts subsequent retrieval toward more
diverse evidence.
The final answer is the step with the best judge score. 

\section{Detailed proofs and explanations of \cref{sec:theory}}
\label{app:greedy_portfolio_proofs}
We give a detailed analysis of \cref{alg:efficient_greedy}, and, in particular, we prove \cref{thm:greedy-analysis}. The running time is immediate from the explanations and the pseudocode in \cref{sec:theory}. We thus focus on the approximation guarantee and the sample complexity $N$, which follows from that the function is monotone and submodular, and a simple uniform convergence bound (via Hoeffding's inequality and a union bound).

Recall that a set function $f:2^V\to\mathbb{R}$ is \emph{normalized} if $f(\emptyset)=0$. 
It is \emph{monotone} if $f(A)\le f(B)$ for all $A\subseteq B\subseteq V$. 
It is \emph{submodular} if it satisfies the diminishing returns property, i.e., for all 
$A\subseteq B\subseteq V$ and $x\in V\setminus B$,
\[
f(A\cup\{x\})-f(A)\ge f(B\cup\{x\})-f(B).
\]
Further note that a normalized, monotone set function is non-negative.

\subsection{Submodularity of the best-of-$k$ objective}

For a fixed query $q$, define
$f_q(S):=\max_{r\in S}s(q,r)$ with $f_q(\emptyset)=0$.
Recall $s(q,r)\in[0,1]$. 

\begin{lemma}[Pointwise monotone submodularity]
\label{lem:pointwise_submodular}
For any fixed $q$, the set function $f_q:2^{\mathcal R}\to[0,1]$ is normalized,
monotone, and submodular.
\end{lemma}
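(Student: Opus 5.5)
We verify the three properties directly from the definition $f_q(S)=\max_{r\in S}s(q,r)$ with the convention $f_q(\emptyset)=0$. Since $s(q,r)\in[0,1]$, $f_q$ takes values in $[0,1]$, and normalization holds by convention.

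\emph{Monotonicity.} Let $A\subseteq B$. If $A=\emptyset$, then $f_q(A)=0\le f_q(B)$ because scores are non-negative. Otherwise, the maximum over $B$ ranges over a superset of the indices in $A$, so $f_q(A)\le f_q(B)$.

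\emph{Submodularity.} Fix $A\subseteq B$ and $x\notin B$, and write $a=f_q(A)$, $b=f_q(B)$, $c=s(q,x)$. The empty-set convention is consistent with the identity $f_q(S\cup\{x\})=\max(f_q(S),c)$, because $c\ge 0$. Hence the marginal gain of $x$ with respect to $S$ is
\[
\max(f_q(S),c)-f_q(S)=\max\bigl(0,\,c-f_q(S)\bigr).
\]
The map $t\mapsto\max(0,c-t)$ is non-increasing in $t$. Monotonicity gives $a\le b$, so $\max(0,c-a)\ge\max(0,c-b)$, which is exactly the diminishing-returns inequality.

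The only point that needs care is the empty set: the identity $\max(f_q(\emptyset),c)=c$ must hold. It does, because scores are non-negative, so $\emptyset$ requires no separate case in the submodularity argument.
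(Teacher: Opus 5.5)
Your proof is correct and takes essentially the same route as the paper. Both arguments get normalization by convention and monotonicity from the maximum over a superset, and both get submodularity from the identity $f_q(S\cup\{x\})-f_q(S)=\max(0,\,s(q,x)-f_q(S))$ together with $f_q(A)\le f_q(B)$. Your explicit check that the empty-set convention is consistent with this identity, because $s(q,x)\ge 0$, is a detail the paper leaves implicit.
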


\begin{proof}
Normalization holds by definition. Monotonicity follows since a maximum cannot
decrease when adding elements.

For submodularity, let $A\subseteq B\subseteq\mathcal R$ and $r\in\mathcal R\setminus B$.
Let $v_A=\max_{r'\in A}s(q,r')$ and $v_B=\max_{r'\in B}s(q,r')$, so $v_A\le v_B$.
Then
\[
f_q(A\cup\{r\})-f_q(A)=\max(0,\,s(q,r)-v_A),
\quad
f_q(B\cup\{r\})-f_q(B)=\max(0,\,s(q,r)-v_B).
\]
Since $v_B\ge v_A$, we have $\max(0,s(q,r)-v_A)\ge \max(0,s(q,r)-v_B)$, proving
diminishing returns.
\end{proof}

Given $Q=\{q_1,\dots,q_N\}\sim\mathcal D^N$, define the empirical objective
\[
\widehat F_Q(S):=\frac1N\sum_{i=1}^N f_{q_i}(S)
=\frac1N\sum_{i=1}^N\max_{r\in S}s(q_i,r).
\]
\begin{lemma}[Population monotone submodularity]
\label{lem:population_submodular}
The objectives $F(S)=\mathbb E_{q\sim\mathcal D}[f_q(S)]$ and $\widehat F_Q(S)$ are normalized, monotone,
and submodular.
\end{lemma}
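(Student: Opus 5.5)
The plan is to reduce everything to Lemma~\ref{lem:pointwise_submodular}, using the fact that normalization, monotonicity, and the diminishing-returns inequality are all preserved under nonnegative linear combinations of set functions. Both $F$ and $\widehat F_Q$ are averages of the pointwise functions $f_q$. For $\widehat F_Q$ this is a finite average with weights $1/N$. For $F$ it is an expectation under $\mathcal D$. So each of the three properties should pass from the $f_q$ to the averaged objective by linearity and positivity of the (empirical or population) expectation.

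Concretely, I will proceed in three steps, each for $F$ first and then noting that $\widehat F_Q$ is the special case where $\mathcal D$ is replaced by the uniform distribution on $Q$.

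\emph{Normalization.} Since $f_q(\emptyset)=0$ for every $q$, we get $F(\emptyset)=\mathbb E_q[0]=0$.

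\emph{Monotonicity.} For $A\subseteq B$, Lemma~\ref{lem:pointwise_submodular} gives $f_q(A)\le f_q(B)$ pointwise in $q$. Taking expectations preserves this inequality.

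\emph{Submodularity.} For $A\subseteq B$ and $r\notin B$, the lemma gives the pointwise inequality
\[
f_q(A\cup\{r\})-f_q(A)\;\ge\; f_q(B\cup\{r\})-f_q(B).
\]
Taking expectations and using linearity of expectation to split each side into differences of $F$-values yields the diminishing-returns inequality for $F$.

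The only point requiring care is that these expectations are well defined and that linearity may be applied. This is the closest thing to an obstacle, though it is mild. Each $f_q(S)$ takes values in $[0,1]$ because $s$ is bounded. I will assume, as is implicit in the definition of $F$, that $q\mapsto s(q,r)$ is measurable for each $r$. Then $f_q(S)$ is a maximum of finitely many measurable bounded functions, since $\mathcal R$ is finite. Hence it is integrable, every expectation above is finite, and no $\infty-\infty$ issues arise when subtracting. For $\widehat F_Q$ nothing is needed beyond finite sums. This completes the plan.
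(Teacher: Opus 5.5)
Your proposal is correct and follows the paper's proof: both reduce to Lemma~\ref{lem:pointwise_submodular} and note that normalization, monotonicity, and diminishing returns are preserved under nonnegative linear combinations, hence under expectations and empirical averages. Your measurability and boundedness remark is a harmless refinement that the paper leaves implicit; read ``uniform distribution on $Q$'' as the empirical distribution, since the i.i.d.\ sample may contain repeated queries.
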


\begin{proof}
By Lemma~\ref{lem:pointwise_submodular}, each $f_q(\cdot)$ is normalized, monotone,
and submodular. Nonnegative linear combinations (and thus expectations) preserve
these properties. 
\end{proof}

\subsection{Uniform convergence from finitely many query samples}

\begin{lemma}[Uniform additive concentration over size-$k$ portfolios]
\label{lem:uniform_convergence}
Fix $\epsilon\in(0,1)$ and $\delta\in(0,1)$. Let $m:=|\mathcal R|$ and
\[
M:=\sum_{j=0}^k {m\choose j}\;\;\le\;\;\Bigl(\frac{e m}{k}\Bigr)^k\quad (1\le k\le m).
\]
If
\[
N \;\ge\; \frac{1}{2\epsilon^2}\log\frac{2M}{\delta},
\]
then with probability at least $1-\delta$ (over $Q\sim\mathcal D^N$),
\[
\sup_{S\subseteq\mathcal R:\,|S|\le k}\left|\widehat F_Q(S)-F(S)\right|\;\le\;\epsilon.
\]
In particular it suffices that
$N=\mathcal O\!\left(\frac{k\log m+\log(1/\delta)}{\epsilon^2}\right)$.
\end{lemma}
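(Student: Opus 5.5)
The plan is a textbook Hoeffding-plus-union-bound argument. For a fixed portfolio $S$ with $|S|\le k$, the random variables $X_i:=f_{q_i}(S)=\max_{r\in S}s(q_i,r)$, $i=1,\dots,N$, are i.i.d. because the $q_i$ are drawn independently from $\mathcal D$. They take values in $[0,1]$ because $s(q,r)\in[0,1]$ and $f_q(\emptyset)=0$. Their mean is $\mathbb E[X_i]=F(S)$, and $\widehat F_Q(S)=\frac1N\sum_i X_i$. Hoeffding's inequality for bounded variables then gives
\[
\Pr\Bigl[\bigl|\widehat F_Q(S)-F(S)\bigr|>\epsilon\Bigr]\;\le\;2\exp(-2N\epsilon^2).
\]
The empty portfolio is trivial: both sides are $0$, so it can be included or excluded freely.

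Next I take a union bound over the finite family $\{S\subseteq\mathcal R: |S|\le k\}$, which has exactly $M=\sum_{j=0}^k\binom{m}{j}$ members. The probability that some $S$ in the family deviates by more than $\epsilon$ is at most $2M\exp(-2N\epsilon^2)$. This is at most $\delta$ precisely when $N\ge \frac{1}{2\epsilon^2}\log\frac{2M}{\delta}$. On the complementary event the supremum bound holds.

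It remains to verify the stated bound $M\le (em/k)^k$ for $1\le k\le m$ and to derive the $\mathcal O$-form. For the binomial sum I use the standard trick: since $k/m\le 1$, we have $(k/m)^{k}\le (k/m)^j$ for $j\le k$. Hence
\[
\Bigl(\frac km\Bigr)^k\sum_{j=0}^k\binom mj\;\le\;\sum_{j=0}^m\binom mj\Bigl(\frac km\Bigr)^j=\Bigl(1+\frac km\Bigr)^m\le e^k,
\]
which gives $M\le (em/k)^k$. Consequently $\log(2M/\delta)\le k\log(em/k)+\log 2+\log(1/\delta)=\mathcal O(k\log m+\log(1/\delta))$, and the claimed sample size follows.

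No step is a real obstacle. The only point needing care is that Hoeffding applies to each fixed $S$ before taking the supremum; the union bound handles the dependence across portfolios, since they share the same sample $Q$. The rest is the elementary binomial estimate above.
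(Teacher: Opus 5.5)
Your proposal is correct and follows the paper's proof essentially step for step. Like the paper, you apply Hoeffding to each fixed $S$, take a union bound over the $M$ portfolios, and choose $N\ge \frac{1}{2\epsilon^2}\log\frac{2M}{\delta}$; the only addition is that you prove $M\le (em/k)^k$ via $(k/m)^k\sum_{j\le k}\binom mj\le(1+k/m)^m\le e^k$, which the paper asserts without proof.
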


\begin{proof}
Fix any $S$ with $|S|\le k$. The variables $X_i:=f_{q_i}(S)\in[0,1]$ are i.i.d.\ with
$\mathbb E[X_i]=F(S)$. By Hoeffding's inequality,
\[
\Pr\left[\left|\widehat F_Q(S)-F(S)\right|>\epsilon\right]\le 2e^{-2N\epsilon^2}.
\]
Union bound over the at most $M$ candidate portfolios gives
\[
\Pr\left[\exists S:\,|S|\le k \text{ s.t. } \left|\widehat F_Q(S)-F(S)\right|>\epsilon\right]
\le 2M e^{-2N\epsilon^2}.
\]
Choosing $N\ge \frac{1}{2\epsilon^2}\log\frac{2M}{\delta}$ makes the RHS at most $\delta$.
The simplification uses $M\le (em/k)^k$.
\end{proof}

\subsection{End-to-end population guarantee}
The following theorem implies the stated guarantee of \cref{thm:greedy-analysis}.
\begin{theorem}[Near-optimality of Algorithm~\ref{alg:efficient_greedy}]
\label{thm:near_optimality}
Let $m:=|\mathcal R|$. Fix $k\in\mathbb N$, $\epsilon\in(0,1)$, and $\delta\in(0,1)$.
Draw $Q\sim\mathcal D^N$ with
$N=\mathcal O\!\left(\frac{k\log m+\log(1/\delta)}{\epsilon^2}\right)$
(as in Lemma~\ref{lem:uniform_convergence}).
Let $S_g$ be the output of Algorithm~\ref{alg:efficient_greedy} (greedy on $\widehat F_Q$).
Then with probability at least $1-\delta$,
\[
F(S_g)\;\ge\;\Bigl(1-\frac1e\Bigr)\max_{|S|\le k}F(S)\;-\;\mathcal O(\epsilon).
\]
More explicitly, if Lemma~\ref{lem:uniform_convergence} holds with deviation at most
$\eta$, then
\[
F(S_g)\;\ge\;\Bigl(1-\frac1e\Bigr)\max_{|S|\le k}F(S)\;-\;\Bigl(2-\frac1e\Bigr)\eta.
\]
\end{theorem}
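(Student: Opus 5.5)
We condition on the event $\mathcal E$ of Lemma~\ref{lem:uniform_convergence}, namely that $|\widehat F_Q(S)-F(S)|\le\eta$ holds simultaneously for every $S\subseteq\mathcal R$ with $|S|\le k$. For the stated choice of $N$ (with $\eta=\epsilon$), this event has probability at least $1-\delta$. The rest of the argument is deterministic given $\mathcal E$.

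The first step is to apply the classical guarantee of \cite{nemhauser1978mp} to the empirical objective. By Lemma~\ref{lem:population_submodular}, $\widehat F_Q$ is normalized, monotone and submodular. Algorithm~\ref{alg:efficient_greedy} is exactly the greedy algorithm on $\widehat F_Q$, because the argmax of $\sum_{q}\max(0,s(q,r)-V[q])$ equals the argmax of the empirical marginal gain. Hence
\[
\widehat F_Q(S_g)\ge\Bigl(1-\frac1e\Bigr)\max_{|S|\le k}\widehat F_Q(S).
\]
If fewer than $k$ elements remain, or marginal gains vanish, the bound still holds, since $|S_g|\le k$ and greedy is standard in that regime.

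The second step transfers this bound to the population objective in three comparisons. Let $S^\star$ maximize $F$ over $|S|\le k$.
\begin{enumerate}
\item Since $|S_g|\le k$, on $\mathcal E$ we have $F(S_g)\ge\widehat F_Q(S_g)-\eta$.
\item The empirical optimum is at least $\widehat F_Q(S^\star)$.
\item Again on $\mathcal E$, $\widehat F_Q(S^\star)\ge F(S^\star)-\eta$.
\end{enumerate}
Chaining these gives
\[
F(S_g)\ge\Bigl(1-\frac1e\Bigr)\bigl(F(S^\star)-\eta\bigr)-\eta=\Bigl(1-\frac1e\Bigr)\opt-\Bigl(2-\frac1e\Bigr)\eta,
\]
which is the explicit bound. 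Taking $\eta=\epsilon$ yields the $\mathcal O(\epsilon)$ form.

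To obtain the exact form $F(S)\ge(1-1/e)\opt-\epsilon$ of Theorem~\ref{thm:greedy-analysis}, we run Lemma~\ref{lem:uniform_convergence} with $\eta=\epsilon/2$. This keeps $N$ within the same order.

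The main point needing care is that uniform convergence must cover the data-dependent set $S_g$. This is exactly why the union bound ranges over all portfolios of size at most $k$, rather than over $S^\star$ alone. Given Lemma~\ref{lem:uniform_convergence}, this step presents no real obstacle. The running-time claim follows from the pseudocode: $k$ rounds, each computing $|\mathcal R|$ marginal gains over $N$ queries, plus $\mathcal O(N)$ updates per round.
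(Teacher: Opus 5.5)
Your proposal is correct and follows the paper's proof essentially step for step. You condition on the uniform-convergence event, identify the algorithm's update with the empirical marginal gain, and apply Nemhauser--Wolsey--Fisher to $\widehat F_Q$. You then chain $F(S_g)\ge \widehat F_Q(S_g)-\eta$ with $\widehat F_Q(S^\star)\ge F(S^\star)-\eta$. Your choice $\eta=\epsilon/2$, which recovers exactly the $-\epsilon$ form of Theorem~\ref{thm:greedy-analysis} since $(2-1/e)\epsilon/2\le\epsilon$, is a small sharpening that the paper leaves implicit as $\eta=\Theta(\epsilon)$.
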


\begin{proof}
Condition on the event from Lemma~\ref{lem:uniform_convergence} that
\[
\sup_{|S|\le k}\bigl|\widehat F_Q(S)-F(S)\bigr|\le \eta.
\]
Let $S^\star\in\arg\max_{|S|\le k}F(S)$ and
$\widehat S^\star\in\arg\max_{|S|\le k}\widehat F_Q(S)$.

\paragraph{Step 1: Algorithm~\ref{alg:efficient_greedy} is greedy on $\widehat F_Q$.}
Fix an intermediate greedy set $S$, and define $V[q_i]=\max_{r'\in S}s(q_i,r')$ as in
Algorithm~\ref{alg:efficient_greedy}. Then for any candidate $r\notin S$,
\begin{align*}
\widehat F_Q(S\cup\{r\})-\widehat F_Q(S)
&=\frac1N\sum_{i=1}^N\Big(\max_{r'\in S\cup\{r\}}s(q_i,r')-\max_{r'\in S}s(q_i,r')\Big)\\
&=\frac1N\sum_{i=1}^N\big(\max(V[q_i],s(q_i,r)) - V[q_i]\big)\\
&=\frac1N\sum_{i=1}^N\max\bigl(0,\,s(q_i,r)-V[q_i]\bigr).
\end{align*}
Thus maximizing $\sum_{i=1}^N\max(0,s(q_i,r)-V[q_i])$ (as done in the algorithm) is
equivalent to maximizing the empirical marginal gain
$\widehat F_Q(S\cup\{r\})-\widehat F_Q(S)$.

\paragraph{Step 2: Greedy achieves $(1-1/e)$ on $\widehat F_Q$.}
By Lemma~\ref{lem:population_submodular}, $\widehat F_Q(\cdot)$ is normalized, monotone,
and submodular. Therefore, by the classical result of Nemhauser--Wolsey--Fisher
\cite{nemhauser1978mp}, the greedy algorithm returns $S_g$ with $|S_g|=k$ such that
\[
\widehat F_Q(S_g)\;\ge\;\Bigl(1-\frac1e\Bigr)\widehat F_Q(\widehat S^\star)
\;\ge\;\Bigl(1-\frac1e\Bigr)\widehat F_Q(S^\star).
\]

\paragraph{Step 3: Transfer from empirical to population.}
Using the uniform deviation bound,
\[
F(S_g)\ge \widehat F_Q(S_g)-\eta
\ge \Bigl(1-\frac1e\Bigr)\widehat F_Q(S^\star)-\eta
\ge \Bigl(1-\frac1e\Bigr)\bigl(F(S^\star)-\eta\bigr)-\eta,
\]
which simplifies to
\[
F(S_g)\ge \Bigl(1-\frac1e\Bigr)F(S^\star)-\Bigl(2-\frac1e\Bigr)\eta
=\Bigl(1-\frac1e\Bigr)\max_{|S|\le k}F(S)-\Bigl(2-\frac1e\Bigr)\eta.
\]
Finally, choosing $N$ as in Lemma~\ref{lem:uniform_convergence} makes $\eta=\Theta(\epsilon)$,
yielding the stated $\mathcal O(\epsilon)$ additive term.
\end{proof}


\end{document}